\newtheorem{lemma}{Lemma}
\newtheorem{problem}{Problem}
\newtheorem{definition}{Definition}
\renewcommand{\COMMENT}[2][.2\linewidth]{%
	\leavevmode\hfill\makebox[#1][l]{//~#2}}
\begin{document}
	\newcommand*{\everymodeprime}{\ensuremath{\prime}}
	\title{Transfer-based adaptive tree for multimodal sentiment analysis based on user latent aspects}
	\author{Sana~Rahmani$^1$, Saeid~Hosseini$^{2*}$, Raziyeh~Zall$^{1*}$, Mohammad~Reza~Kangavari$^1$, Sara~Kamran$^1$, Wen~Hua$^3$ \IEEEcompsocitemizethanks{
			\IEEEcompsocthanksitem$^1$ S. Rahmani, R. Zall, M. Kangavari, S. Kamran are with School of computer engineering, Iran University of Science and Technology, Iran. \protect\\Email: {sana\_rahmani}@comp.iust.ac.ir, {zall\_razieh}@iust.ac.ir, {kanagvari}@iust.ac.ir, and {sara.kamran72}@gmail.com
			\IEEEcompsocthanksitem$^2$ S. Hosseini is with the Faculty of Computing and Information Technology, Sohar University, Sohar, Oman. Email: {sahosseini}@su.edu.om
			\IEEEcompsocthanksitem$^3$ W. Hua is with the School of Information Technology and Electrical Engineering, University of Queensland, Australia. E-mail: {w.hua}@uq.edu.au
			\IEEEcompsocthanksitem$^*$ S. Hosseini and R. Zall contributed equally
		}
		\thanks{}}
	\markboth{}
	{Shell \MakeLowercase{\textit{et al.}}: Bare Demo of IEEEtran.cls for Computer Society Journals}
	\IEEEtitleabstractindextext{
		\begin{abstract}
		Multimodal sentiment analysis benefits various applications such as human-computer interaction and recommendation systems. It aims to infer the users' bipolar ideas using visual, textual, and acoustic signals. Although researchers affirm the association between cognitive cues and emotional manifestations, most of the current multimodal approaches in sentiment analysis disregard user-specific aspects. To tackle this issue,  we devise a novel method to perform multimodal sentiment prediction using cognitive cues, such as personality. Our framework constructs an adaptive tree by hierarchically dividing users and trains the LSTM-based submodels, utilizing an attention-based fusion to transfer cognitive-oriented knowledge within the tree. Subsequently, the framework consumes the conclusive agglomerative knowledge from the adaptive tree to predict final sentiments. We also devise a dynamic dropout method to facilitate data sharing between neighboring nodes, reducing data sparsity. The empirical results on real-world datasets determine that our proposed model for sentiment prediction can surpass trending rivals. Moreover, compared to other ensemble approaches, the proposed transfer-based algorithm can better utilize the latent cognitive cues and foster the prediction outcomes. Based on the given extrinsic and intrinsic analysis results, we note that compared to other theoretical-based techniques, the proposed hierarchical clustering approach can better group the users within the adaptive tree.	
		\end{abstract}
		\vspace{-3mm}
		\begin{IEEEkeywords}
			Transfer-based adaptive tree, Cognitive cues, Dynamic dropout, Hierarchical training, Multimodal sentiment analysis, Attention-based fusion
		\end{IEEEkeywords}}		
		\maketitle
		\IEEEpeerreviewmaketitle
		\IEEEraisesectionheading{\section{Introduction}\label{introductionSection}}
		\IEEEPARstart{A}{s} a branch of affective computing, sentiment analysis can classify user-generated content as positive or negative \cite{Poria2017}. This research area gains growing interest in various domains: (i) In Human-Computer Interaction, to expand the interchange between the agents and individuals \cite{PerezGaspar2016}. (ii) In political forecasting, to study political sentiments \cite{Ebrahimi2017}. (iii) In recommendation systems, to analyze the reviews of consumers \cite{Li2016}. (IV) In market forecasting, to promote financial prediction outcome \cite{Ren2018}. With the rapid rise of social media, individuals mostly tend to convey their opinions in videos, creating the opportunity to analyze multiple input channels known as Acoustic, Visual, and Textual modals. Such multimodal input carries surplus information on the individual's latent cues, including personality, motivations, and other ignored parameters in current sentiment analysis methods. Given a video clip $v_i$ for user $l_k$, we aim to employ a cognitive-based ensemble approach to improve sentiment analysis for $v_i$ through considering the cognitive cues of the user $l_k$, denoted by $C_k$. However, certain \textit{challenges} abound:\\
		\textit{\textbf{Challenge 1 (Missing Cognitive Annotations)}}\\
		\indent The sentiment analysis task needs the annotations to represent the cognitive cues of the user in the input video. However, the majority of available datasets on multimodal affective interpretation, including MOSI \cite{Zadeh2016}, MOUD \cite{PerezRosas2013}, and IEMOCAP \cite{Busso2008}, are not enriched by cognitive annotations. Aiming to augment the datasets using cognitive cues, one can leverage the self and full-supervised methods. Additionally, utilizing the pre-trained models may seem partially inaccurate because they include the perturbations from the initial trainee dataset. To address this issue, one may opt for devising a framework to reduce the cross dataset effects or choose a labeled dataset with the minimum track of skews.\\
		\textit{\textbf{Challenge 2 (Assorted Categories Affected by Various Cues)}}\\
		\indent The resultant categories constructed by the cognitive-oriented subset-based ensembles can have substantial similarities or overlook information-theoretical parameters including, entropy. Moreover, the unbiased categories must adapt to the deep learning approaches to avoid starving or overfitting. In retrospect, choosing the correct values for adjustment parameters in categorization can be challenging. Dimension, cut-point, and the number of clusters or hierarchical levels are samples of such parameters.
		\begin{figure}[H]
			\centering
			\vspace{-4mm}
			\begin{subfigure}{0.49\linewidth}
				\centering
				\includegraphics[width=1\textwidth]{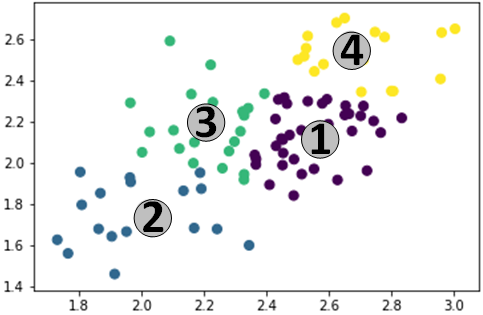}
				\vspace{-5mm}
				\caption{Data categorization on MOSI\cite{Zadeh2016}}
				\label{fig:clustervis}
			\end{subfigure}
			\centering
			\begin{subfigure}{0.49\linewidth}
				\centering
				\includegraphics[width=1\textwidth]{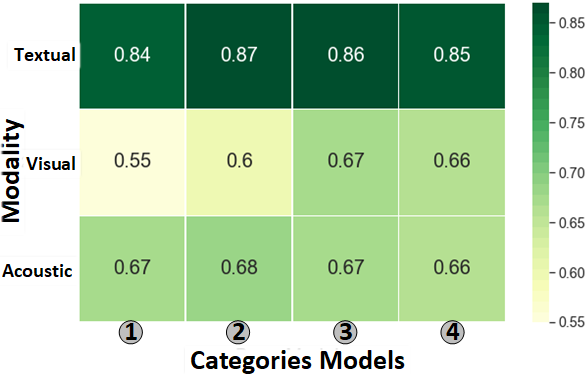}
				\vspace{-5mm}
				\caption{modals accuracy comparison}
				\label{fig:Heat-categoriez}
			\end{subfigure}
			\vspace{-3mm}
			\caption{Base Models Accuracy Comparison}
			\vspace{-4mm}
			\label{fig:Heatmap2}
		\end{figure}		
		\par\noindent\textbf{Observation} To demonstrate how cognitive cues can partition a set of data instances, we set up an observation that performs clustering on the videos in the MOSI affective dataset. As depicted in Fig. \ref{fig:clustervis}, we utilize the Euclidean distance to classify the videos into four groups. The data points and individual colors respectively signify videos and pertinent clusters. Furthermore, to investigate the effect of categorization on emotional manifestation, Fig. \ref{fig:Heat-categoriez} shows the accuracy heatmap to compare the modalities of constructed clusters.
		The performance of the sentiment analytics on video transcripts is higher than other input channels, reaching up to \%87 on accuracy. Moreover, the accuracy deviation in various clusters of the visual channel implies that the level of cognition can directly affect the sentiment understanding. Hence, besides the effectiveness of the categorization algorithms, the availability of sufficient cognitive cues can foster the sentiment analysis procedure.\\
		\textit{\textbf{Challenge 3 (Data Scarcity in Subset Categories)}}\\
		\indent Intuitively, we perform top-bottom hierarchical categorization on the adaptive tree to better exploit the cognitive correlation between subsets. However, dividing the data in each node into portions to form subsets can cause data scarcity, where the data enclosed with some nodes may not suffice to train the model, affecting the learning capability of the deep architecture. Relying merely on the pruning procedure to control the expansion of the adaptive tree based on the correlation criteria can negatively affect the performance of the cognitive-based categorization. To address this issue, we urge that besides the pruning supervision, the subsets must also be enriched to preserve the original subset characteristics and avoid unintended influence on models. \vspace{1mm}\\
		\noindent\textbf{Contributions.} While most of the current works \cite{Zadeh2017,Poria2017a,Majumder2018,Huddar2018}
		neglect the indispensable role of the cognitive cues in multimodal sentiment analysis, our proposed framework in this paper enhances analytical results by leveraging user-specific latent characteristics. To the best of our knowledge, we reveal the utopian spirit of the hidden parameters and devise the first approach based on an adaptive tree that utilizes an attention-based fusion to facilitate cognitive-oriented knowledge transfer within the tree, forming an ensemble of submodels.
		In retrospect, where our previous works \cite{Najafipour2020,Hosseini2020,Hua2016} focus on short-text understanding, we propose an approach that not only takes advantage of the textual analytics but also infers the cognitive cues from acoustic and visual mediums to leverage the contrast between individuals and effectively predict the sentiments. Our contributions are fourfold:\\
		\vspace{-6mm}
		\begin{itemize}
			\item We develop a novel transfer-based adaptive tree to facilitate ensembles on multimodal submodels.
			\item We analyze multiple categorization strategies to classify individuals by leveraging pertinent latent cues.
			\item We design a dynamic dropout approach to regularize data of relevant classes during hierarchical training. 	
			\item We propose a unified cognitive-based framework for sentiment analysis on multimodal datasets.
			\end{itemize}
		\vspace{-2mm}
		\par\noindent The rest of the paper is as follows: in Sec. \ref{relatedwork}, we study the literature; in Sec. \ref{problemstatement}, we provide the problem and our framework; in Sec. \ref{methodology} and \ref{experiments} we explain our model and experiments. Lastly, we conclude the paper in Sec. \ref{conclusion}\vspace{-6mm}.	
		\section{Related Work}
		\label{relatedwork}
		\vspace{-1mm}
		As Table \ref{tab:literaturereview} briefs, the literature is threefold: multimodal sentiment analysis, transfer learning, and leveraging individual characteristics\vspace{-5mm}.
		\subsection{Multimodal Sentiment Analysis}
		\label{MultiModal_Sentiment_Analysis}
		\vspace{-1mm}
		Sentiment analysis as a branch of affective computing aims to exploit the polarity expressed by an individual in three possible categories of positive, negative, or neutral \cite{Poria2017}. Accordingly, numerous research works analyze the sentiments from each of the modals (unimodal), including text \cite{Song2020}, audio \cite{Sun2017}, and video \cite{Kessous2010}. Moreover, multimodal affective analysis has adapted various input modals \cite{Corneanu2016}, including acoustic-textual \cite{Eyben2010}, acoustic-visual \cite{Kessous2010,Woellmer2013}, and the tri-modal unified frameworks \cite{Zadeh2017,Poria2017a}. The main challenge in the multimodal analysis is the fusion method \cite{Poria2017} that comprises two literature domains of decision and feature level, facilitating a variety of modals combinations. Weighted voting \cite{Evangelopoulos2013} and Bayes belief integration \cite{Chanel2011} are two instances of decision-level fusion, fusing independent predictions of each modal into a unified output. The feature-level approaches such as Tensor \cite{Zadeh2017} and attention-based \cite{Poria2017a} generate the modal-specific embeddings to feed into another final analysis module. Nevertheless, various architectures adapt models, including Multiple Kernel Learning (MKL) on Support Vector Machine (SVM) classifiers \cite{Poria2017b} and Recurrent Neural Networks (RNN) classes of Gated-Recurrent Unit (GRU)\cite{Majumder2018} and Long Short-Term Memory (LSTM) \cite{Woellmer2013,Poria2017a}, to target the context-dependent analysis of inputs. Ensemble Learning approaches utilize various modules to improve the performance by combining the results \cite{Zall2016,Zall2019}. Some train the same model on distinct datasets, as \cite{Rozgic2012} utilizes various SVM classifiers trained on different dataset categories distinguished by emotion labels. Others train divergent models on the same dataset, as \cite{Huddar2018} applies SVMs together with logistic regression, decision tree, random forest, and clustering modules on the same corpus. In retrospect, we propose an ensemble on multimodal LSTM models equipped with an attention-based fusion trained on data divisions based on the composer’s latent cues\vspace{-4mm}.
		\subsection{Transfer Learning}
		\vspace{-1mm}
		Transfer Learning (TL) approaches benefit from pre-trained models to improve learning by transferring knowledge between domains or extracting new representations of input data as feature sets \cite{Liu2019}. TL module adopts into unimodal analysis approaches of textual, visual, and acoustic. For textual, \cite{Felbo2017} utilizes TL to perform textual analysis on emojis, and \cite{Hazarika2021} applies transformer encoding to detect emotions in conversations, passing multi-turn conversation parameters to an emotion classifier. For visual, \cite{TamilPriya2020} applies TL on large-scale image classification to analyze visual cues in affective analysis. Finally, \cite{Deng2017} applies TL in sparse auto-encoding to accomplish acoustic emotion prediction.\\
		Furthermore, multimodal affective analysis adopts TL in various applications: For instance, \cite{Kaya2017} detects audio-visual emotions by obtaining CNN features of facial expressions through the TL module. In retrospect, aiming to reduce the effect of catastrophic forgetting in TL, the multi-task learner in \cite{Gideon2017} adapts into multimodal affective analysis by learning progressive networks on paralinguistic tasks, recognizing emotions, speakers, and gender. Similarly, \cite{Akhtar2019} forms the multi-task learner by appending a classifier to each emotion, where the work extends to \cite{Akhtar2020} by integrating topical classification. Accordingly, we perform multimodal analysis with a novel ensemble through a hierarchical knowledge transfer in an adaptive tree based on users' latent characteristics\vspace{-5mm}.
		\subsection{Individual Characteristics}
		\vspace{-1mm}
		In this section, we review the personal characteristics that can influence users’ expressions. We get inspired by cognitive studies that affirm how significantly personality factors can affect human behavior \cite{Verduyn2012}. Accordingly, we survey the literature assuming five personality traits as the primary latent cues of humans. The research on uni-modal personality analysis comprises trilateral input modals of acoustic \cite{Mohammadi2012}, textual \cite{Majumder2017}, and visual \cite{TeijeiroMosquera2014}. Likewise, there is comprehensive research to investigate multimodal personalities, \cite{Guecluetuerk2017} leverages a deep residual network that can predict personality on the first impression for audio-visual input.\\
		Similarly, Batrinca et al. \cite{Batrinca2016} apply audio-visual predictions on a Human-Computer Interaction (HCI) task, and \cite{Aslan2021} collectively uses quadrilateral modals of audio, text, face, and video background. Other works \cite{Basak2018}\cite{Durupinar2009} examine personality in the context of synthetic applications, simulating the crowd through constructing artificial agents designated by personality weights that replicate emotion contagion in incidents. From another perspective, \cite{Yang2019,Capuano2015,Lin2017} exploit applications on personality perception. While \cite{Yang2019} devises a user-centric recommendation system where the personality factor determines the similarity weights between users and items, \cite{Capuano2015} leverages personality scores from social networks to improve the user interaction with information systems. Comparably, \cite{Lin2017} utilizes the user’s personality to improve the textual sentiment evaluation in short texts. Nevertheless, the majority of practices in affective evaluations that consider user characteristics are solely unimodal. In response, we utilize user-specific characteristics in a multimodal analytical system to improve sentiment prediction results.
		\begin{table}[H]
			\vspace{-2mm}
			\centering
			\caption{Literature Review}
			\vspace{-3mm}
			\def\arraystretch{1.5}
			\begin{tabular}{cll}
				\Xhline{2\arrayrulewidth}
				Category & Approaches& References                                                                                                                                                                                                                                                  \\ \hline
				\multirow{2}{*}{\begin{tabular}[c]{@{}c@{}}Multimodal\\ Sentiment Analysis\end{tabular}}
				& Traditional Learning&\cite{Rozgic2012}\cite{Huddar2018}\\
				& Deep Learning &\cite{Woellmer2013}\cite{Zadeh2017}\cite{Poria2017a}\cite{Poria2017b}\\ \hline
				\multirow{2}{*}{\begin{tabular}[c]{@{}c@{}}Transfer Learning\end{tabular}} 
				& Knowledge Transfer & \cite{Hazarika2021}\cite{Gideon2017}\cite{Akhtar2019}\cite{Akhtar2020}\\
				& Feature Extraction & \cite{Felbo2017}\cite{TamilPriya2020}\cite{Deng2017}\cite{Kaya2017}\\ \hline
				\multirow{2}{*}{\begin{tabular}[c]{@{}c@{}}Individual\\ Characteristics\end{tabular}} 
				& Prediction & \cite{Mohammadi2012}\cite{Majumder2017}\cite{TeijeiroMosquera2014}\cite{Guecluetuerk2017}\cite{Batrinca2016}\cite{Aslan2021}\\
				& Perception & \cite{Basak2018}\cite{Durupinar2009}\cite{Yang2019}\cite{Capuano2015}\cite{Lin2017}\\
				\Xhline{2\arrayrulewidth}
			\end{tabular}
			\label{tab:literaturereview}
			\vspace{-5mm}
		\end{table}
		\section{Problem Statement}
		\label{problemstatement}		
		\vspace{-1mm}
		In this section, we elucidate primary definitions, the problem statements, and the proposed framework\vspace{-5mm}.
		\subsection{Preliminary Concepts}
		\label{sec:preliminary concepts}
		\vspace{-1mm}
		\begin{definition} (Sentiment Value)
			\label{def:Sentiment vlaue}			
			The Sentiment value denoted by $s \in \{0,1\}$ is a binary value, representing a positive or negative opinion regarding an entity\vspace{-2mm}.
		\end{definition}
		\begin{definition} (Video)
			\label{def:video}			
			A video denoted by $v_i \in \mathbb{V}$ reflects the participating user, $l_k$ reactions and is associated with a sentiment value, $v_i.s$, start time, $v_i.t_s$, end time, $v_i.t_f$, and tri-modal data, including, Acoustics ($v^a_i$), Visuals ($v^v_i$), and Textual ($v^t_i$) contents\vspace{-2mm}.
		\end{definition}
		\begin{definition} (Utterance)
			\label{def:utterance}			
			Utterance, $u_{j,i} \in \mathbb{U}$, represents the partition $j$ of video $v_i$, bounded by pauses. Each utterance $u_{j,i}$ contains a sentiment value $u_{j,i}.s$, start time $u_{j,i}.t_s$, and the end time $u_{j,i}.t_f$, such that:
			\vspace{-2mm}
			\begin{itemize}
				\item The bounding rule applies as $[u_{j,i}.t_s , u_{j,i}.t_f] \subseteq [v_i.t_s , v_i.t_f]$.
				\item Utterances shall not overlap with each other in a video, $\forall m , n$, if $m \neq n$ then $[u_{m,i}.t_s , u_{m,i}.t_f] \cap [u_{n,i}.t_s , u_{n,i}.t_f] = \O$.
				\vspace{-1mm}
			\end{itemize}
			Furthermore, any $u_{j,i}$ is associated with Acoustics ($u^a_{j,i}$), Visuals ($u^v_{j,i}$), and Textual contents ($u^t_{j,i}$). Also, each video $v_i$ is composed of a set of utterances, $v_i=\{u_{j,i}|1\leq j\leq \digamma_v^{len}(v_i)\}$, where $\digamma_v^{len}: \mathbb{V}\rightarrow\mathbb{Z}$ is the function retrieving the number of utterances in the given video and $v_i.s$ represents the majority aggregation on sentiment values of the pertinent utterances\vspace{-2mm}. 
		\end{definition}
		\begin{definition} (User)
			\label{def:user}
			Each video $v_i$ is associated with a user $l_k \in \mathbb{L}$ that expresses a sentiment value for every contained utterance $u_{j,i}$. Given one or more videos composed by user $l_k$, we can utilize the function $\digamma_l^v:\mathbb{L}\rightarrow\mathbb{V}$ to retrieve pertinent videos\vspace{-2mm}.
		\end{definition}	
		\begin{definition} (Cognitive Cue)
			Each cognitive cue, denoted by $c^y \in \mathbb{R}$, can constitute any latent cue, like personality, that can distinguish individuals.
			\vspace{-5mm}
		\end{definition}
		\subsection{Problem Definition}		
		\vspace{-1mm}		
		\begin{problem} (multimodal sentiment analysis)
			\label{prob:cognitive-based}
			Given a video $v_i$ of the user $l_k$, our goal is to improve the estimation of sentiments through utilizing the latent cues of $l_k$\vspace{-2mm}.
		\end{problem}	
		\begin{problem} (enhancing intra-ensemble awareness)
			\label{prob:ensemble-tree}
			Given a set of submodels $M$, where each inference submodel $m_i \in M$ partially contributes to the prediction process, our goal is to enrich the knowledge submission between submodels in the ensemble\vspace{-2mm}.
		\end{problem}	
		\begin{problem} (reducing data scarcity)
			\label{prob:borrow-neighbour}
			Given a set of videos $V$, our goal is to enrich any subset $\tilde{V} \subseteq V$ with data insufficiency\vspace{-5mm}.
		\end{problem}	
		\subsection{Framework Overview}
		\vspace{-1mm}
		Fig. \ref{fig:Framework} illustrates our proposed unified framework that employs cognitive cues to improve the sentiment prediction. Through the offline phase, we firstly feed tri-modal visual, textual, and acoustic data into feature extraction modules to construct an aligned, normalized set for each of the Modals. For data augmentation, we develop an inference algorithm that extracts latent aspects of individuals. To follow, we devise a novel adaptive tree with each node comprising a user set, pertinent collective knowledge core, and a transfer-based submodel. On the one hand, we employ clustering and theoretical-based approaches in a top-bottom procedure to perform hierarchical categorization of users based on their cognitive cues, and on the other hand, we train the transfer-based submodel and form the knowledge core in parallel through a bottom-up process. Accordingly, the LSTM submodel consumes the attention-based fusion that comprises node-specific user data and the knowledge core, aggregated by the submodels of the child nodes. Moreover, a nontrivial module adopts users' data from similar neighbors to tackle data sparsity in training submodels within immature nodes through a dynamic dropout approach. Finally, the prediction module utilizes the adaptive tree to evaluate the sentiments based on user-specific latent cues\vspace{-4mm}.
		\begin{figure}[htp]
			\centering
			\includegraphics[width=1\linewidth]{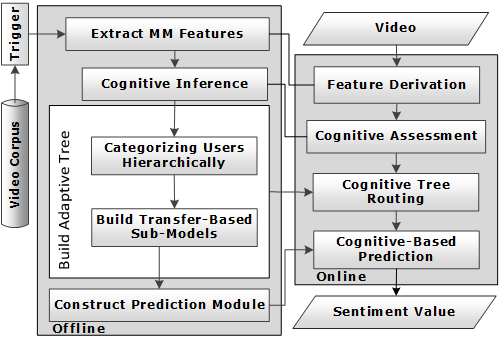}
			\vspace{-7mm}
			\caption{Framework}
			\label{fig:Framework}
			\vspace{-4mm}
		\end{figure}
	
		\noindent In the online phase, we pass each input video into feature extraction and cognitive assessment modules adopted from the offline section. Given the cognitive cues of the video composer, we then find a top-bottom route that leads to the most relevant leaf node. Conversely, we take the route to transfer the knowledge by executing the submodels in a bottom-up manner. Finally, given the collective knowledge of the tree, the prediction module performs the final cognitive-based sentiment prediction of the input data\vspace{-5mm}.
		\section{Methodology}
		\label{methodology}
		\vspace{-1mm}
		\subsection{Offline Phase} 
		\label{offline_phase}
		\vspace{-1mm}
		\subsubsection{Extracting Feature Vectors}
		\label{Feature_Extraction}
		As elucidated in Sec. \ref{sec:preliminary concepts}, every input utterance, $u_{j,i}$, is associated with acoustic ($u^a_{j,i}$), visual ($u^v_{j,i}$), and textual ($u^t_{j,i}$) modals.
		We designate a modal-specific embedding to extract the feature vectors, denoted by $\vec{x}_{j,i}$, for each modal ($\vec{x}_{j,i}^m$, $m\in(a,v,t)$). The steps after feature extraction include normalization and alignment, followed by an assignment process which links features to respective utterance. Accordingly, we explain the feature extraction modules as follows:\\ 
		\textbf{Acoustic Feature Extraction. } Given the significance of acoustic expression in affective computation, we need a legitimate feature representation process. Hence, for each utterance, $u_{j,i}$ with respective prevalent textual and acoustic pre-alignments as $u^t_{j,i}$ and $u^a_{j,i}$, we partition the monophonic audio signal into textually coherent partitions. Subsequently, we adopt the acoustic means, including COVAREP \cite{Degottex2014} and OpenSmile \cite{Eyben2010a}, to exploit the audio feature vector for each partition, denoted by $\vec{x}^a_{j,i}$, including the low-level and diverse granular descriptors (LLD). To continue, we devise an acoustic-LSTM module to generate the embedded representation for each input segment, $\vec{x}^a_{j,i}$.\\
		\textbf{Visual Feature Extraction. } To infer the facial cues, we fragment the visual contents $u^v_{j,i}$ into the frames, isolate the metaphorical facial region from the background for each frame, and exclude negative frames where the human face is unobserved. To adjust the visual feature vector $\vec{x}^v_{j,i}$, we successively utilize Openface2 \cite{Baltrusaitis2018} to detect facial landmarks, estimate head pose and gaze direction, and recognize facial action units. In contrast, we apply the Facet algorithm \cite{Krizhevsky2009} to analyze the feature value distribution on various granular frame representations. The proposed Visual-LSTM generates an embedded representation for each vector $\vec{x}^v_{j,i}$.\\		
		\textbf{Textual Feature Extraction.} The textual contents, enclosed as transcripts to utterance $u^t_{j,i}$, are not infallible but tractable to exploit the textual features. A naïve solution would be to notoriously train the exact textual contents \cite{Hosseini2018}\cite{Hosseini2014}, yielding incorrect correlation weights \cite{Najafipour2020}\cite{Hosseini2020}. Alternatively, we utilize semantic embedding models \cite{Pennington2014} to obtain word vectors. We then utilize the uni-dimensional convolution networks besides the order-sensitive LSTM. Here, each convnet serves the neural network as a preprocessing tool. In other words, the convnet can transform the long input sequence into shorter sequences of higher-level features $\vec{x}^t_{j,i}$. Finally, the Textual-LSTM can feed the feature vector $\vec{x}^t_{j,i}$ to generate the associated embedded representations\vspace{-3mm}.
		\subsubsection{Cognitive Inference}
		\label{Cognitive_Inference2}
		\vspace{-1mm}
		As a complement to feature extraction, it is through mapping to cognitive space that we can hierarchically categorize the users. Here, the cognitive space ($\Omega , \digamma_\Omega^d$) adapts to a pseudo-metric $\mathbb{R}^{|C|}$ space \cite{ArkhangelSkii2012} with $\Omega$ as the set of points, $C$ as the set of cognitive cues, and $\digamma_\Omega^d$ as the pseudo-metric distance. While each cognitive cue $c^y$ represents a dimension in the $\Omega$ space, the \textit{surrogate-point} $\varphi_k=(c^1_k,\dots,c^{|C|}_k)$ delineates the user $l_k$ into $\Omega$.
		In addition, we denote the set of \textit{surrogate-points} as $\Phi$ and clarify the connotations within $\Omega$ by defining the following functions:\\
		\textit{\textbf{\textit{Mapping}}} ($\digamma_\Omega^m$: $\mathbb{L}\rightarrow\Omega$): the function mapping the user $l_k$ to a surrogate-point $\varphi_k$ in $\Omega$ using each cognitive cue $c^y_k$.\\
		\textit{\textbf{\textit{Distance}}} ($\digamma_\Omega^d$: $\Omega \times\Omega\rightarrow\mathbb{R}^+_0$): a pseudo-metric function to measure the dissimilarity ratio between each pair of points in $\Omega$, denoted by ($\varphi_i$,$\varphi_j$). Only those distance metric models that can sufficiently satisfy the pseudo-metric axioms, as formulated in eq. \ref{eq:distcond}, are eligible to estimate the similarity.\vspace{-2mm}
		\begin{equation}
			\small
			\label{eq:distcond}
			\left\{ \begin{array}{l}
			\begin{multlined}[0.5\columnwidth]Positive\;Definiteness: \digamma_\Omega^d(\varphi_i,\varphi_j) \geq 0;\end{multlined}\vspace{1mm}\\
			\begin{multlined}[0.5\columnwidth]Symmetry:  \digamma_\Omega^d(\varphi_i,\varphi_j)=\digamma_\Omega^d(\varphi_j,\varphi_i);\end{multlined}\vspace{1mm}\\
			\begin{multlined}[0.5\columnwidth]Triangle\;Inequality:\vspace{-5mm}\\ \digamma_\Omega^d(\varphi_i,\varphi_j)\leq\digamma_\Omega^d(\varphi_i,\varphi_k) +\digamma_\Omega^d(\varphi_k,\varphi_j);\end{multlined}
			\end{array}\right.
		\end{equation}
		We adopt the effective vector-based Minkowski measure as the \textit{distance} $\digamma_\Omega^d$ (Eq.\ref{eq:distcog}) that is generalizable and capable of accommodating an infinite number of cognitive cues. $\varphi_i$ and $\varphi_j$ instantiate a pair of points in cognitive space $\Omega$, and $r \in \mathbb{N}$ denotes the adjustment parameter\vspace{-2mm}.
		\begin{equation} 
			\small
			\label{eq:distcog}
			\digamma_\Omega^d(\varphi_i,\varphi_j)=\displaystyle(\sum_{d=1}^{|C|}(|c_i^d-c_j^d|)^r)^{\dfrac{1}{r}}
			\vspace{-2mm}
		\end{equation}
		Successively, we propose the mapping functionality $\digamma_\Omega^m$ by an effective model \cite{Guecluetuerk2017} that consumes the multimodal users' data to predict multi-aspect character attributes. To continue, we propose an observation on user mapping using three cues of agreeableness, consciousness, and extroversion. As Fig. \ref{fig:clustering3dspace} depicts, we appoint each user in the dataset \cite{Zadeh2016} to a surrogate point in $\Omega$ using a three-dimensional weight. Subsequently, we can utilize the K-means method for clustering, illustrated in various colors.
		\begin{figure}[htp]
			\vspace{-5mm}
			\centering
			\includegraphics[width=0.65\linewidth]{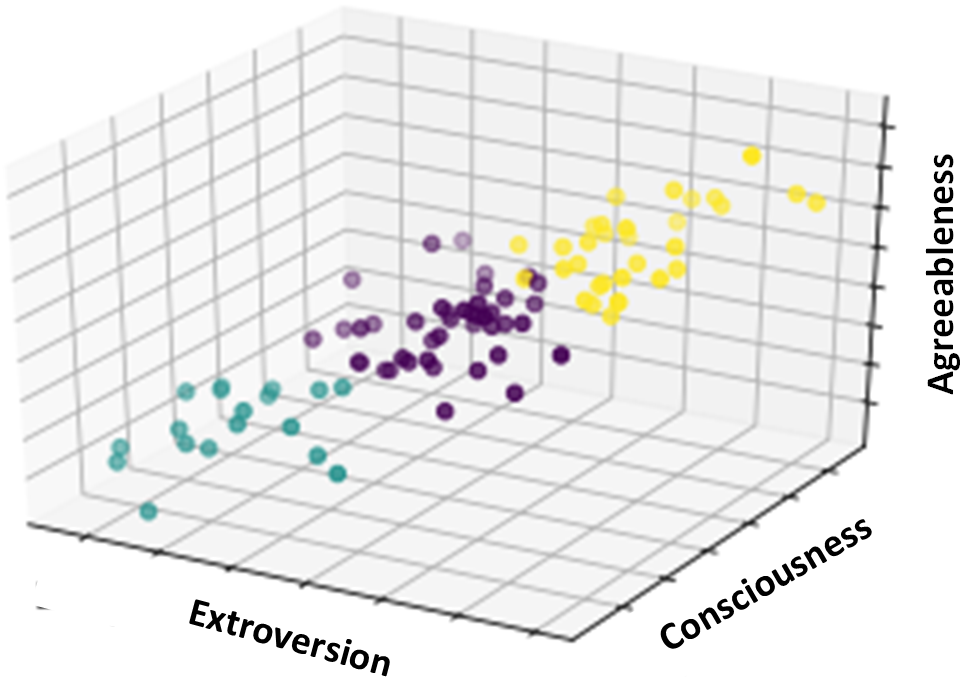}
			\vspace{-3mm}
			\caption{User distribution in a three-dimensional $\Omega$ on MOSI\cite{Zadeh2016}}
			\label{fig:clustering3dspace}
			\vspace{-5mm}
		\end{figure}
		\begin{lemma}
			\textit{The cognitive space is a pseudo-metric space.}
			\label{lemma_nq} 
			\vspace{-2mm}	
		\end{lemma}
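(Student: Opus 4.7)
The plan is to verify directly that the distance $\digamma_\Omega^d$ defined in Eq.~\ref{eq:distcog} satisfies the three pseudo-metric axioms listed in Eq.~\ref{eq:distcond}. Positive definiteness and symmetry will follow from elementary properties of the absolute value and the $r$-th root, while the triangle inequality will reduce to the classical Minkowski inequality on $\mathbb{R}^{|C|}$; this last step is the main obstacle and the only one requiring real work.

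First I would dispose of the two easy axioms. Positive definiteness holds because each summand $|c_i^d - c_j^d|^r$ is non-negative, so the sum is non-negative and its $r$-th root lies in $\mathbb{R}^+_0$. Symmetry is immediate term-by-term, since $|c_i^d - c_j^d| = |c_j^d - c_i^d|$ preserves each summand and hence the whole expression.

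The hard part is the triangle inequality. My plan is to invoke the Minkowski inequality: for any two real sequences $(a_d)_{d=1}^{|C|}$ and $(b_d)_{d=1}^{|C|}$ and any $r \geq 1$,
\begin{equation*}
\Bigl(\sum_{d=1}^{|C|} |a_d + b_d|^r\Bigr)^{1/r} \;\leq\; \Bigl(\sum_{d=1}^{|C|} |a_d|^r\Bigr)^{1/r} + \Bigl(\sum_{d=1}^{|C|} |b_d|^r\Bigr)^{1/r}.
\end{equation*}
Given three surrogate-points $\varphi_i,\varphi_j,\varphi_k \in \Omega$, I would set $a_d := c_i^d - c_k^d$ and $b_d := c_k^d - c_j^d$, so that $a_d + b_d = c_i^d - c_j^d$. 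Substituting into the Minkowski inequality then yields exactly $\digamma_\Omega^d(\varphi_i,\varphi_j) \leq \digamma_\Omega^d(\varphi_i,\varphi_k) + \digamma_\Omega^d(\varphi_k,\varphi_j)$. The Minkowski inequality itself is a standard consequence of H\"older's inequality applied with conjugate exponent $r/(r-1)$ when $r > 1$, and it degenerates to the term-wise triangle inequality for absolute values when $r = 1$; since the paper restricts $r$ to $\mathbb{N}$ with $r \geq 1$, both cases are covered and I would either cite the standard textbook proof or sketch the H\"older-based derivation in one line.

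Finally, I would close by justifying the choice of \emph{pseudo}-metric rather than full metric: two distinct users $l_k \neq l_{k'}$ may be sent by $\digamma_\Omega^m$ to coincident surrogate-points $\varphi_k = \varphi_{k'}$ whenever all of their cognitive cues $c^y_k$ agree, giving $\digamma_\Omega^d(\varphi_k,\varphi_{k'}) = 0$ without $\varphi_k$ and $\varphi_{k'}$ identifying distinct elements of $\mathbb{L}$. This is precisely why identity of indiscernibles is deliberately omitted from Eq.~\ref{eq:distcond}, and confirms that $(\Omega,\digamma_\Omega^d)$ inherits exactly the pseudo-metric structure claimed.
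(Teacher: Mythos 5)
Your proposal is correct, but it takes a different route from the paper. The paper's own proof of this lemma is purely a proof by contradiction of the \emph{negative} half of the claim: it assumes $(\Omega,\digamma_\Omega^d)$ were a genuine metric space, exhibits two distinct users whose cognitive cues coincide so that $\digamma_\Omega^m$ sends them to the same surrogate-point with zero distance (Eq.~\ref{eq:lemm1}), and concludes that the identity-of-indiscernibles axiom fails — hence the space is only pseudo-metric. The verification that the Minkowski measure actually satisfies the pseudo-metric axioms of Eq.~\ref{eq:distcond} is deferred to the paper's separate second lemma, which argues via vector norms, the expansion $\|u+v\|^2=\|u\|^2+\|v\|^2+2\,u\cdot v$, and the Cauchy--Schwarz inequality. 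You instead prove the positive half directly: positive definiteness and symmetry term-by-term, and the triangle inequality by the classical Minkowski inequality with the substitution $a_d=c_i^d-c_k^d$, $b_d=c_k^d-c_j^d$, backed by H\"older for $r>1$; you then append the coincident-users observation only to explain the ``pseudo'' qualifier. Your route is self-contained (it does not lean on a companion lemma) and is actually more general, since the dot-product/Cauchy--Schwarz argument of the paper's second lemma is really tailored to the Euclidean case $r=2$, whereas Minkowski's inequality covers every $r\geq 1$ allowed by Eq.~\ref{eq:distcog}; what the paper's split buys in exchange is an explicit, named justification of why a full metric structure is impossible, which in your write-up appears only as a closing remark rather than as the centerpiece.
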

		\begin{proof}
			By a contradiction, we assume $(\Omega,\digamma^d_\Omega)$ is a metric space. In response, the distance module $\digamma^d_\Omega$ must preserve the metric axioms, including the non-zero criterion for the distance between any pair of specific points (Eq. \ref{eq:metriccond})\vspace{-2mm}.
			\begin{equation}
				\small
				\label{eq:metriccond}
				\forall \varphi_i,\varphi_j\in\Omega:\; \varphi_i\neq\varphi_j\Longleftrightarrow \digamma^d_\Omega(\varphi_i,\varphi_j)>0
				\vspace{-2mm}
			\end{equation}
			Let $l_i$ and $l_j$ be a pair of users that can have identical cognitive cues. Inherently, due to the high cognitive correlation as formalized in Eq. \ref{eq:lemm1}, $\digamma_\Omega^m$ can map the pair to the surrogate-points of the same coordination in $\Omega$, reflecting the zero distance between users. Such an outcome can contradict the primary assumption, proving the proposition.
			\vspace{-2mm}
			\begin{equation}
				\small
				\label{eq:lemm1}
				\begin{multlined}[0.5\columnwidth]
				\exists l_i, l_j, l_i\neq l_j\; and \; \underset{c_y \in C}{\forall}( \digamma_\Omega^m(l_i).c_y=\digamma_\Omega^m(l_j).c_y) \vspace{-2mm} \\ \Longrightarrow \digamma_\Omega^d(\digamma_\Omega^m(l_i),\digamma_\Omega^m(l_j))=0
				\end{multlined}
			\vspace{-5mm}
			\end{equation}
			\vspace{-2mm}
		\end{proof}
		\begin{lemma}
			\textit{Minkowski measure satisfies pseudo-metric axioms.}
			\vspace{-2mm}
			\begin{equation}
				\small
				\label{eq:lem2normdist}
				\left\{ \begin{array}{l}
					\digamma^d_\Omega(\varphi_i,\varphi_k)=||u+v||\\
					\digamma^d_\Omega(\varphi_i,\varphi_j)=||u||\\
					\digamma^d_\Omega(\varphi_j,\varphi_k)=||v||
				\end{array}\right.
				\vspace{-2mm}
			\end{equation}	
		\end{lemma}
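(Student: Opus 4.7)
The plan is to verify each of the three pseudo-metric axioms listed in equation~(\ref{eq:distcond}) for the Minkowski distance defined in equation~(\ref{eq:distcog}), exploiting the fact that $\digamma^d_\Omega$ is, up to coordinate differences, the $\ell^r$ norm on $\mathbb{R}^{|C|}$. Setting $u = \varphi_i - \varphi_j$ and $v = \varphi_j - \varphi_k$ coordinate-wise so that $u + v = \varphi_i - \varphi_k$, equation~(\ref{eq:lem2normdist}) identifies the three relevant pairwise distances with $\|u\|_r$, $\|v\|_r$, and $\|u+v\|_r$, which reduces the lemma to standard properties of a norm.

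Positive definiteness and symmetry I would dispatch quickly. Each summand $|c_i^d - c_j^d|^r$ in equation~(\ref{eq:distcog}) is nonnegative, so the sum and its real $r$-th root are nonnegative, yielding $\digamma^d_\Omega(\varphi_i,\varphi_j) \geq 0$. Symmetry is immediate from $|c_i^d - c_j^d| = |c_j^d - c_i^d|$ applied termwise inside the sum. Both properties go through for any admissible $r$ without further assumption.

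The main obstacle is the triangle inequality, which reduces to the classical Minkowski inequality $\|u+v\|_r \leq \|u\|_r + \|v\|_r$ for $r \geq 1$. My plan is to prove this along the standard H\"older route: first use the scalar triangle inequality to bound $|u_d + v_d|^r \leq |u_d|\,|u_d+v_d|^{r-1} + |v_d|\,|u_d+v_d|^{r-1}$ termwise; then apply H\"older's inequality with conjugate exponents $r$ and $r/(r-1)$ to each of the two sums obtained after summing over $d$; and finally divide through by $\|u+v\|_r^{r-1}$ to isolate the desired bound. Since equation~(\ref{eq:distcog}) takes $r \in \mathbb{N}$ and any cognitively meaningful choice satisfies $r \geq 1$, the conjugate exponent $r/(r-1)$ is well-defined (the degenerate case $r=1$ is handled directly by the scalar triangle inequality, without H\"older).

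Substituting back through equation~(\ref{eq:lem2normdist}) then yields $\digamma^d_\Omega(\varphi_i,\varphi_k) \leq \digamma^d_\Omega(\varphi_i,\varphi_j) + \digamma^d_\Omega(\varphi_j,\varphi_k)$, the third axiom of equation~(\ref{eq:distcond}). Since the identification-of-indiscernibles condition that separates a metric from a pseudo-metric is already shown to fail by Lemma~\ref{lemma_nq}, verifying these three axioms is exactly what is needed to complete the proof.
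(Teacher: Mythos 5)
Your proposal is correct, but it takes a genuinely different route from the paper's. The paper uses the same decomposition $u=\varphi_i-\varphi_j$, $v=\varphi_j-\varphi_k$ and likewise dismisses positive definiteness and symmetry as trivial, but for the triangle inequality it expands $\|u+v\|^2=\|u\|^2+\|v\|^2+2\,u\cdot v$ (Eq.~\ref{eq:lem2eq1}), squares the target inequality, cancels, and observes that what remains is the Cauchy--Schwarz inequality $u\cdot v\le\|u\|\,\|v\|$, which it then cites to conclude. That argument is shorter, but it implicitly treats $\digamma^d_\Omega$ as an inner-product (Euclidean, $r=2$) norm: the dot-product identity it relies on fails for the general $\ell^r$ distance actually defined in Eq.~\ref{eq:distcog} when $r\neq 2$, and the chain is written starting from the desired conclusion, so its validity rests tacitly on the reversibility of the squaring step. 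Your H\"older-based derivation of the classical Minkowski inequality, with the termwise bound $|u_d+v_d|^r\le |u_d|\,|u_d+v_d|^{r-1}+|v_d|\,|u_d+v_d|^{r-1}$, conjugate exponents $r$ and $r/(r-1)$, and the separate $r=1$ case, proves the triangle inequality for every $r\ge 1$, i.e.\ for the distance as the paper defines it, at the cost of a somewhat longer argument; what the paper's route buys is brevity and a single appeal to Cauchy--Schwarz, what yours buys is generality in $r$ and a forward-directed proof.
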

		\begin{proof}
			With axioms in Eq. \ref{eq:distcond}, the proof to preserve the Triangle Inequality can leave out the naïve proofs about both trivial features of the Minkowski metric, positive definiteness and the symmetry criterion. Accordingly, let $\varphi_i,\:\varphi_j,\:\varphi_k\in \Omega$ be a triple surrogate-points and define the vectors $u$ and $v$ as $u=\varphi_i-\varphi_j$ and $v=\varphi_j-\varphi_k$, with the sum vector as $u+v=\varphi_i-\varphi_k$. Hence, the vector norms in Eq. \ref{eq:lem2normdist} equate to the distance between corresponding points. Given $u.v$ as the dot product of the pair of vectors $u$ and $v$, we can adopt the norm convention to affirm the symmetry in Eq. \ref{eq:lem2eq1} \vspace{-2mm}.
			\begin{equation}
				\small
				\label{eq:lem2eq1}
				||u+v||^2=||u||^2+||v||^2+2 u.v
				\vspace{-2mm}
			\end{equation}
			Given the triangle axiom, we must prove Eq. \ref{eq:lem2eq2} inequality\vspace{-2mm}:
			\begin{equation}
				\small
				\label{eq:lem2eq2}
				\digamma^d_\Omega(\varphi_i,\varphi_k)\leq\digamma^d_\Omega(\varphi_i,\varphi_j)+\digamma^d_\Omega(\varphi_j,\varphi_k)
				\vspace{-2mm}
			\end{equation}
			To commence, we can attain the sequential order in Eq. \ref{eq:lem2ineq} by the criteria verbalized in Eq. \ref{eq:lem2normdist}\vspace{-2mm}:
			\begin{equation}
				\small
				\label{eq:lem2ineq}
				\begin{multlined}[0.5\columnwidth]||u+v||\leq ||u||+||v||\overset{Square}{\Longrightarrow}\\
					||u+v||^2\leq ||u||^2+||v||^2+ 2||u||\times||v||
				\end{multlined}
				\vspace{-2mm}
			\end{equation}
			By replacing the Eq. \ref{eq:lem2eq1} in Eq. \ref{eq:lem2ineq}, we can retrieve Eq. \ref{eq:lem2eq3}\vspace{-2mm}.
			\begin{equation}
				\small
				\label{eq:lem2eq3}
				\begin{multlined}[0.5\columnwidth]
					||u||^2+||v||^2+2 u.v\leq ||u||^2+||v||^2+ 2||u||\times||v||\\
					\Longrightarrow u.v\leq||u||\times||v||
				\end{multlined}
				\vspace{-2mm}
			\end{equation}
			Here the Eq. \ref{eq:lem2eq3} results in Cauchy-Schwarz inequality\cite{Ge2013} ($|u.v|\leq||u||\times||v||$) that logically proves the lemma.
		\end{proof}
		\vspace{-4mm}
		\subsubsection{Adaptive Ensemble Tree Definition}
		\label{ensmebletreedefinition}
		\vspace{-1mm}
		The necessity to devise a novel tree structure to categorize users through performing sharding on the $\Omega$ space sets off before determining the sentiment prediction models. Aiming to carry out the user grouping procedure, we rely on the verified association between the individuals' latent cues and affective manifestation \cite{Ngai2015}.\\	
		Inspired by VP-Tree \cite{Malkov2018}, which searches more efficiently compared to R*-tree \cite{Taha2015}, we develop a non-trivial adaptive tree $T=(N,E)$ with $N$ and $E$ as the set of nodes and edges. In response, the tree utilizes surrogate points to divide $\Omega$ space based on the dimensions. Accordingly, the sharding procedure hierarchically constructs a collection of subspaces from $\Omega$, facilitating parallel processing in simultaneous training of multiple subspaces.\\
		The properties of each node $n$ can be threefold: (1) The \textit{Subspace} $n^\Phi$ comprising a set of surrogate points created by sharding on the parent node, (2) The \textit{Submodel} $n^m$, a transfer-based LSTM model that consumes the videos in $n^\Phi$, denoted by $\{\digamma^v_l(l_k)|l_k\in n^\Phi\}$, and (3) The \textit{Knowledge Core} $n^k$ to gain an aggregated perception from the child nodes. Furthermore, the adaptive tree must hold two functions\vspace{-2mm}: 
		\begin{itemize}
			\item\textbf{\textit{Fragmentation}} ($\digamma_T^f: N\rightarrow \Omega$)
			divides the subspace pertinent to the given node $n^\Phi$ into $k$ tractable clusters, adjusted by parameter $\ddot{\rho}$ where the cluster set should attain specific criteria denoted by $\gamma^\Phi$\vspace{-2mm}:
			\begin{equation}
				\small
				\label{eq:nodefragmantationcond}
				\gamma^\Phi=\left\{ \begin{array}{l}
				\begin{multlined}[0.5\columnwidth]\Phi_i\neq\emptyset,\; i\leq k; \end{multlined}\\
				\begin{multlined}[0.5\columnwidth]\cup_{i=1}^k\Phi_i=n_i^\Phi;\end{multlined}\\
				\begin{multlined}[0.5\columnwidth]\Phi_i\cap\Phi_j=\varnothing,\;i,j\leq k\;\&\;i\neq j;\end{multlined}\vspace{-1mm}\\
				\end{array}\right.
				\vspace{-1mm}
			\end{equation}
  			Here $\Phi_i$ and $\Phi_j$ are the splits from fragmenting the node subspace $n^\Phi$. Union ($\cup$) ensures the subspace $n^\Phi$ will be a portmanteau of resultant splits and intersection ($\cap$) affirms the exclusivity of the splits.
			\item\textbf{\textit{Disjunction}} ($\digamma_T^d:\Omega\rightarrow\mathbb{R}^+_0$) specifies the relative distance between every pair of node subspaces in the same level of the adaptive tree (Eq.\ref{eq:nodedist}). For ($n_i$,$n_j$) as a pair, the disjunction will be the cognitive \textit{distance} $\digamma_\Omega^d$ between the medoids, denoted by $\tilde{\varphi}$:\vspace{-2mm}
			\begin{equation}
				\small
				\label{eq:nodedist}
				\digamma_T^d(n_i^\Phi,n_j^\Phi) = \digamma_\Omega^d(n_i^\Phi.\tilde{\varphi},n_j^\Phi.\tilde{\varphi})
				\vspace{-2mm}
			\end{equation}
		\end{itemize}
		\subsubsection{Cognitive space categorization}
		\label{sec:Cognitive-space-categorization}
		In this step, we devise a modified top-down breadth-first traversing method to construct the adaptive tree (Sec. \ref{ensmebletreedefinition}) by iteratively partitioning the $\Omega$ space. Given node $n_i$, we can terminate the node branching by controlling the triple data attributes of \textit{cohesion}, \textit{sparsity}, and \textit{amplitude}. Given $H$  as the set of fragments on the node $n_i$, we define trilateral termination criteria for each split $\Phi\;\in H$ as follows: \\
		\textbf{Criterion1.} Subject to cohesiveness attribute for each split $\Phi$, Eq. \ref{eq:procrcohesive} determines the \textit{unity} measure $\digamma^{uni}$ to compute the cognitive \textit{distance} $\digamma_\Omega^d$ between each pair of points in $\Phi$\vspace{-2mm}.
		\begin{equation}
			\small
			\label{eq:procrcohesive}
			\digamma^{uni}(\Phi)=\left( \begin{array}{c} |\Phi| \\ 2 \end{array} \right)^{-1} \underset{\varphi_x,\varphi_z \in \Phi}{\sum}\digamma_\Omega^d(\varphi_x,\varphi_z)
			\vspace{-2mm}
		\end{equation}
		Eq. \ref{eq:recuperation} obtains the branching tendency on the node subspace $n_i^\Phi$ for each split by evaluating the quality of fragmentation using \textit{unity} and \textit{disjunction} trade-offs. Hence, we collectively analyze the mean-variance (MV) to compute the \textit{recuperation} capacity $\digamma^{rcp}$ using the acquired measures\vspace{-2mm}.
		\begin{equation}
			\small
			\label{eq:recuperation}
			\digamma^{rcp}(H)=\underset{\Phi \in H}{MV}(\dfrac{(|H|-1)\times\digamma^{uni}(\Phi)}{\underset{\Phi_j \in H,\Phi_j\neq\Phi}{\sum}\digamma_T^d(\Phi,\Phi_j)})
			\vspace{-2mm}
		\end{equation}
		Where lower values of recuperation ensure better cohesion and separation, higher values than $\theta_p$ can stop branching.\\
		\textbf{Criterion2.} Besides the recuperation ratio, we also need to ensure the training capability of independent splits. Hence, we propose an entropy-based criterion that consumes the sentiment labels within split data to control the bias. Correspondingly, Eq. \ref{eq:pertUtterances} formalizes the set of utterances $U_j$ pertinent to users of a subspace $\Phi_j$\vspace{-2mm}.
		\begin{equation}
			U_j=\{u_{z,x}|u_{z,x}\subseteq v_x,v_x\in\digamma_l^v(l_k), l_k\in\Phi_j\}
			\vspace{-2mm}
			\label{eq:pertUtterances}
		\end{equation}
		Eq. \ref{eq:entropyprop} can estimate the portion of utterances $U_j$ convey the sentiment $\tilde{s}$ in each split $\Phi_j$\vspace{-2mm}.
		\begin{equation}
			\small
			\label{eq:entropyprop}
			\digamma^{pr}(\tilde{s},\Phi_j)=\{\dfrac{|u|}{|U_j|} | u \in U_j, u.s=\tilde{s}\}
			\vspace{-2mm}
		\end{equation}
		Eq. \ref{eq:shannonentropy} adopts Shannon entropy \cite{Li2020} to measure data impurity in the split $\Phi_j$\vspace{-2mm}.
		\begin{equation}
			\small
			\label{eq:shannonentropy}
			\digamma^{imp}(\Phi_j) = -\underset{\tilde{s} \in \{0,1\}}{\sum} \digamma^{pr}(\tilde{s},\Phi_j) \log(\digamma^{pr}(\tilde{s},\Phi_j))
			\vspace{-2mm}
		\end{equation}
		We can utilize the impurity ratio to evaluate the quality of the set $H$ of splits by ${\biguplus}_{\Phi_j\in H}\digamma^{imp}(\Phi_j)$, where $\biguplus$ denotes the weighted average. Accordingly, we can compare the result with the threshold $\theta_e$ to terminate nodes with lower rates.\\	
		\textbf{Criterion3.} Relying on data amplitude, this restrain asserts under what verge the training can infer meaningful information from each of the splits. Given the set of pertinent utterances $U_j$ in Eq. \ref{eq:pertUtterances} for a split $\Phi_j$, we ensure the amplitude capacity of the splits using threshold $\theta_a$, a prerequisite to terminate the cases with lower values than ${min}_{\Phi_j\in H}\{|U_j|\}$.\vspace{1mm}\\
		Algorithm \ref{alg:gentreegeneration} initializes the adaptive tree $T=(N,E)$, the cognitive aggregate learning machine, with $N$ and $E$ as the set of nodes and edges. Accordingly, the fragmentation module can extend the tree by adopting clustering and theoretical-based categorization techniques. We utilize a breadth-first iterative fragmentation procedure to generate the subspaces for each node. Thus, we insert the subspaces in-row with parents into the allocated queue, $Q$, where the genuine $\Omega$ space represents the root node of no parent.\\ 
		In a FIFO manner, we pop the item $x$ in each iteration from the queue with pertinent parent and subspace. We then initialize $x$ as node $n$ and append it to the intrinsic tree. Additionally, we \textit{fragment} the node $n$ subspace by adjusting an optimized $\ddot{\rho}$ to retrieve $H$ as the set of child splits. To continue, we apply trilateral termination criteria on $H$ to evaluate branching and either append the node $n$ as a leaf or insert each split of $H$ with $n$ as a parent into the queue, resulting in the growth of the subspaces in $Q$.			
		\vspace{-3mm}
		\begin{algorithm}[H]
			\caption{Adaptive Tree Generation}
			\label{alg:gentreegeneration}
			\textbf{Input:} $\Omega$\\
			\textbf{Output:} $T,Leaf$
			\begin{algorithmic}[1]
				\STATE $Leaf=\emptyset, N=\emptyset, E= \emptyset, Q=\emptyset $
				\STATE $Q.Enque(\{\Omega,\emptyset\})$
				\WHILE{$Q \neq \emptyset$}
				\STATE $\;x=Q.Deque()$
				\STATE $n=Node(x[0],\emptyset,\emptyset)$
				\STATE $N.append(n)$
				\STATE $E.append(\{n,x[1]\})$
				\STATE $\ddot{\rho}=AdaptParameters(x[0])$
				\STATE $H=NodeFragmentation(n,\ddot{\rho})$
				\IF{$Terminate(H)$}
				\STATE $Leaf.append(n)$
				\ELSE
				\FORALL{h in H}
				\STATE $Q.Enque(\{h,n\})$
				\ENDFOR
				\ENDIF
				\ENDWHILE
				\STATE $T=(N,E)$
				\STATE return $T$, Leaf
			\end{algorithmic}
			\vspace{-1mm}
		\end{algorithm}
		\vspace{-3mm}
		\noindent Prior to the tree construction, the fragmentation procedure divides the subspace of each vertex $n$ into a set of tractable clusters, to be handled in two ways as follows:\vspace{2mm}\\
		\textbf{Theoretical-based Subspace Categorization.} In this approach, we employ the fragmentation $\digamma_T^f$ to build the tree $T$ within $\Omega$ space using the cut-points as $\Omega$’s dimensions. Accordingly, $\digamma_T^f$ can consume the parameter $\ddot{\rho}$ comprising the pair \{$c^*,c^*_{cut}$\} to fragment given node on the $c^*$ dimension to maximize the heterogeneity with $c^*_{cut}$ as the optimal cut-point. Having $C$ as the set of dimensions, Eq.\ref{eq:dimentionselection} verbalizes the dimension $c^*$ selection for the fragmentation of node $n$\vspace{-1mm}.
		\begin{equation}
			\small
			\label{eq:dimentionselection}
			c^*=\underset{c\in C}{\arg \max}\{\digamma^{het}(n^\Phi,c)\}
			\vspace{-3mm}
		\end{equation}
		Inspired by \cite{Basak2005}, the ratio $\digamma^{het}$ retrieves the heterogeneity by consuming the subspace $\Phi$ and the dimension $c$ (Eq.\ref{eq:hetromeasure})\vspace{-2mm}.
		\begin{equation}
			\small
			\label{eq:hetromeasure}
			\digamma^{het}(\Phi,c)=\underset{\varphi_x,\varphi_z \in \Phi} {-\sum}\widetilde{d}_{x,z}(1-\widetilde{d}_{x,z}^c)+\widetilde{d}_{x,z}^c(1-\widetilde{d}_{x,z})
			\vspace{-3mm}
		\end{equation} 
		Here $\widetilde{d}_{x,z}$ equates the local normalized cognitive distance  $\digamma^d_\Omega(\varphi_x,\varphi_z)\times d_{max}^{-1}$ with $d_{max}$ as the maximum local distance (i.e., ${Max}_{\varphi_x, \varphi_z\in \Phi}\{\digamma^d_\Omega(\varphi_x,\varphi_z)\}$), and $\widetilde{d}^c_{x,z}$ is calculated in the shrunk $\Omega$ space by the dimension $c$. To continue, we can use Eq.\ref{eq:cut-pointselection} to assert fragmentation with the best cut-point $c^*_{cut}$ that, on the one hand, maintains cohesiveness within each of the splits, and on the other hand, ensures unbiased labels in each resultant subspaces\vspace{-2mm}. 
		\begin{equation}
			\small
			\label{eq:cut-pointselection}
			c^*_{cut}=\underset{k\in c^*}{\arg \max}\{ \lambda\underset{\Phi\in H_k}{\biguplus}\digamma^{het}(\Phi,c^*)+(1-\lambda)\underset{\Phi\in H_k}{\biguplus}\digamma^{imp}(\Phi) \}
			\vspace{-2mm}
		\end{equation}
		Here $\lambda$ is the adjustment parameter and $H_k$ denotes the split pair formed by cutting the node $n^\Phi$ using the cut-point $k$.\vspace{2mm}\\
		\textbf{Cluster-based Subspace Categorization.}  In this approach, we adopt the simple but effective K-means clustering as $\digamma_T^f$ to fragment the subspaces. Unlike the theoretical-based model, K-means comprises all dimensions in partitioning, resulting in a more flexible sub-spacing. As verbalized in Eq.\ref{eq:clustering}, we consume the parameter setting $\ddot{\rho}$ of $\{d^*\}$ on node $n$ to categorize the node subspace $n^\Phi$ into $d^*$  clusters\vspace{-2mm}.
		\begin{equation}
			\small
			\label{eq:clustering}
			\digamma_T^f(n,\{d^*\})=\underset{R}{\arg\min}\{\overset{|n^\Phi|}{\underset{x=1}{\sum}}\overset{d^*}{\underset{y=1}{\sum}}r_{xy}\times\digamma_\Omega^d(\varphi_x,\Phi_y.\tilde{\varphi}) \}
			\vspace{-2mm}
		\end{equation}
		Here R=($r_{xy}$) is an $|n^\Phi|\times d^*$ matrix, $r_{xy}\in\{0,1\}$ estimates if user $x$ participates in cluster $\Phi_y$ with $\Phi_y.\tilde{\varphi}$ as medoid, where the matrix assures exclusivity in fragmentation conditioned by $\sum^{d^*}_{y=1}r_{xy}=1$ and $0<\sum^{|n^\Phi|}_{x=1}r_{xy}<|n^\Phi|$. Finally, Eq. \ref{eq:clusternum} computes the ideal number of clusters, denoted by $d^*$.\vspace{-1mm}
		\begin{equation}
			\small
			d^*=\underset{d}{\arg\max}\{\underset{\Phi\in H_d}{\biguplus}\digamma^{slh}(\Phi)+\underset{\Phi\in H_d}{\biguplus}\digamma^{imp}(\Phi)-\digamma^{rcp}(H_d) \}
			\label{eq:clusternum}
			\vspace{-2mm}
		\end{equation}
		Here $H_d$ is the set of splits generated by $\digamma_T^f(n_i,\{d\})$ and $\digamma^{slh}$ is the silhouette measure \cite{Iglesias2019}.
		\begin{figure}[H]
			\centering
			\begin{subfigure}{0.49\linewidth}
				\centering
				\includegraphics[width=\textwidth]{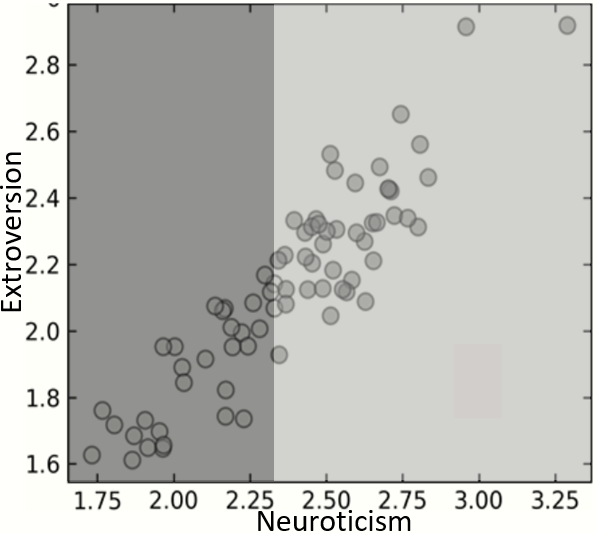}
				\vspace{-5mm}
				\caption{Theoretical-based}
				\label{fig:INF-cat}
			\end{subfigure}
			\centering
			\begin{subfigure}{0.49\linewidth}
				\centering
				\includegraphics[width=\textwidth]{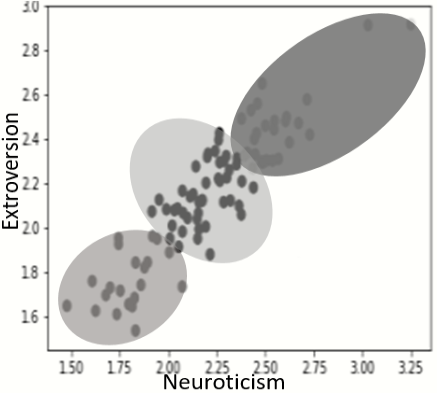}
				\vspace{-5mm}
				\caption{Clustering-based}
				\label{fig:CHF-cat}
			\end{subfigure}
			\hfill
			\vspace{-3mm}
			\caption{Proposed Categorizations}
			\vspace{-3mm}
			\label{fig:ObsCat}
		\end{figure}
		\noindent Fig. \ref{fig:ObsCat} compares the theoretical-based categorization versus clustering. Given extroversion and neuroticism cognitive cues, the $\Omega$ space can be of two dimensions. While the theoretical-based approach initially divides the space into two splits using respective cut-point on neuroticism, the clustering-based approach groups the surrogate-points into three clusters\vspace{-1mm}.    
		\subsubsection{Transfer-Based Submodels and Knowledge Core}
		In this section, we utilize a bottom-up traversing approach to concurrently construct the transfer-based submodels and the knowledge cores. As visualized in Fig. \ref{fig:treearch}, for each node $n$, on the one hand, we form the knowledge core $n^k$ by consuming the submodel output of the child nodes, and on the other hand, we collectively feed the submodel $n^m$ with the pertinent video contents of the users belonging to the node subspace and the knowledge core $n^k$. Hence, the tree can meticulously transfer the knowledge from child submodels to the corresponding parent submodels.
		\begin{figure}[h]
			\centering
			\vspace{-4mm}
			\includegraphics[width=1\linewidth]{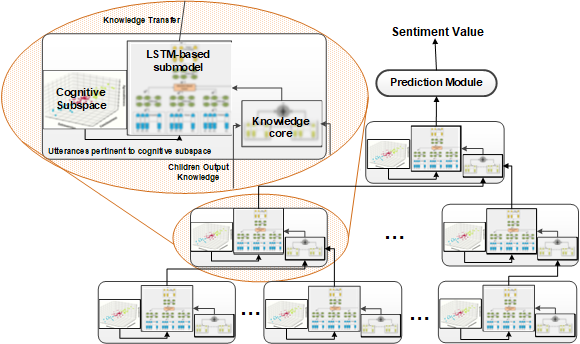}
			\vspace{-6mm}
			\caption{The Proposed Adaptive Tree Architecture}
			\label{fig:treearch}
			\vspace{-4mm}
		\end{figure}
	
		\noindent As a prerequisite to generating the node-specific submodels, we pad the utterance set of each video to a maximum length $g$ as formulated in Eq. \ref{eq:MaxUttPerVid}\vspace{-3mm}:
		\begin{equation}
			\small
			\label{eq:MaxUttPerVid}
			g = \underset{v_i\in\mathbb{V}}{\max} \digamma_v^{len}(v_i)
			\vspace{-2mm}
		\end{equation}
		Given node $n$, each submodel $n^m$ includes three unimodal LSTMs for each input modal and a multimodal LSTM. The training of $n^m$ conjointly conveys both the node pertinent videos $\{v_i|v_i\in\digamma_l^v(l_k),l_k\in n^\Phi\}$ and the knowledge core $n^k$. Given an input video $v_i$ with the set of utterances $[u_{1,i},u_{2,i},\dots,u_{g,i}]$, we define the context criterion for $u_{i,j}$ as $\{u_{i,z}|\forall z<g , z\neq j\}$. As Fig. \ref{fig:deeparch} visualizes the submodel architecture for $n^m$, given feature vectors $X^m=\{x^m_t|x^m_t\in\mathbb{R}^{d_m},t\leq g\}$ with a set of modals as $m\in(A,V,T)$ and $d_m$ as the length of the corresponding feature vector, the unimodal models can generate modal-specific embeddings, represented by $H^m=\{h^m_t|h^m_t\in\mathbb{R}^{\tilde{d}},t\leq g\}$. The resultant embeddings $H^m$ and the node knowledge core $K=\{k_t|k_t\in \mathbb{R}^{\tilde{d}},i\leq g \}$ collectively feed into the attention-based fusion module to provide the fused input of $H^F=\{h^F_t|h^F_t\in\mathbb{R}^{\tilde{d}},t\leq g\}$ for the final multimodal model. To conclude, the multimodal model utilizes $H^F$ to transfer an enhanced knowledge, denoted by $H^k=\{h^k_t|h^k_t\in\mathbb{R}^{\tilde{d}},t\leq g\}$, to the parent knowledge core.
		\begin{figure}[h]
			\centering
			\vspace{-5mm}
			\includegraphics[width=1\linewidth]{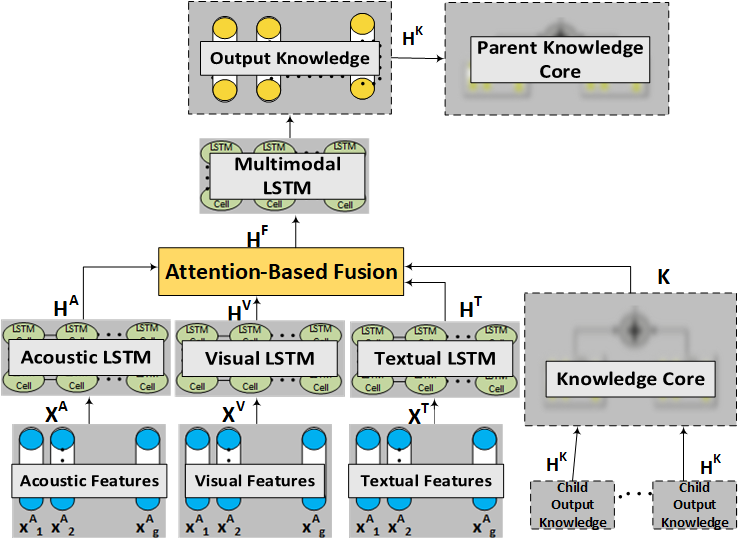}
			\vspace{-6mm}
			\caption{Submodels architecture}
			\label{fig:deeparch}
			\vspace{-3mm}
		\end{figure}
		\par \noindent For both unimodal and multimodal models, INLSTM: $\mathbb{R}^{d_m \times g}\rightarrow\mathbb{R}^{\tilde{d} \times g}$ as an LSTM procedure can convey sentiment analysis. As formalized in Eq. \ref{eq:lstmformula}, besides the internal state $s_t^m$, and the output $h_t^m$, the INLSTM comprises three gates of forget $f_t^m$, external-input $q_t^m$, and output $o_t^m$\vspace{-2mm}.
		\begin{equation*}
			\small
			f_t^m = \S(U_f^m.X^m + W_f^m.h_{t-1}^m + b_f^m) 
			\vspace{-1mm}
		\end{equation*}
		\begin{equation*}
			\small
			q_t^m = \S(U_e^m.X^m + W_e^m.h_{t-1}^m + b_e^m) 
			\vspace{-1mm}
		\end{equation*}
		\begin{equation*}
			\small
			o_t^m = \S(U_o^m.X^m + W_o^m.h_{t-1}^m + b_o^m) 
			\vspace{-1mm}
		\end{equation*}
		\begin{equation*}
			\small
			s_t^m = f_t^m \odot s_{t-1}^m + q_t^m \odot tanh(U_s^m.X^m + W_s^m.h_{t-1}^m + b_s^m)
			\vspace{-1mm}
		\end{equation*}
		\begin{equation}
			\small
			\label{eq:lstmformula}
			h_t^m= o_t^m \odot tanh(s_t^m)
			\vspace{-1mm}
		\end{equation}
		Here, $W^m_f$, $W^m_e$, $W^m_o$, $W^m_s$, $U^m_f$, $U^m_e$, $U^m_o$, $U^m_s \in\mathbb{R}^{d_m\times2d_m} , b_f^m$, $b_c^m$, $b_o^m$, $b_s^m \in \mathbb{R}^{d_m}$ are training parameters where $\S$ is the sigmoid function and $\odot$ denotes the element-wise multiplication.
		Given $m\in (A,V,T)$, the INLSTM can generate the pertinent embedding output for the given channel as $H^m$. Aiming to carry out the fusion procedure (Eq. \ref{eq:attentionformula}), we concatenate INLSTM outputs with the knowledge core $K$ to form $H=\{H^A||H^V||H^T||K\}$ that feeds the multimodal input for the attention-based module\vspace{-2mm}.
		\begin{equation*}
			\small
			b_t = tanh(W_b.H)
			\vspace{-1mm}
		\end{equation*}
		\begin{equation*}
			\small
			\alpha_t = \sigma(\omega_b^T.b_t)
			\vspace{-1mm}
		\end{equation*}
		\begin{equation*}
			\small
			c_t = H.\alpha_t^T
			\vspace{-1mm}
		\end{equation*}
		\begin{equation}
			\small
			\label{eq:attentionformula}
			h_t^F = tanh(W_h.c_t + U_h.H)
			\vspace{-2mm}
		\end{equation}
		Having $\sigma$ as the softmax function, we can retrieve $H^F$ as an attention-based fusion for the output of unimodal layers and the knowledge core to train the multimodal layer within INLSTM, resulting in $H^K$. Finally, we can transfer the knowledge from $H^K$ to feed the knowledge core of the parent node, continuing the bottom-up learning procedure toward the root in the hierarchy\vspace{-4mm}.
		\subsubsection{Constructing Prediction Module}
		\label{sec:constructpredictionmodule}		
		In general, our proposed framework maps the dataset users to the $\Omega$ space, resulting in an adaptive tree that can hierarchically categorize the users and learn the ensemble of submodels. The prediction module can estimate the sentiments by consuming the final improved representation of the data in the root. As formalized in Eq. \ref{eq:sigma_W_g}, the prediction module can utilize a softmax function on $H_0^K=[h_{0,1}^K,h_{0,2}^K,\dots,h_{0,g}^K]$ with $h_{0,t}^K\in\mathbb{R}^{\tilde{d}}$ that is the output of the root node submodel\vspace{-2mm}.
		\begin{equation}
		\label{eq:sigma_W_g}
			\small
			g_t = \sigma(W_g.(h_{0,t}^K)^T + b_g) 			
			\vspace{-2mm}
		\end{equation}
		\begin{equation*}
		\small
		\label{eq:predictionmodule}
		y_t' = \underset{m\in [0,1]}{\arg\max}(g_t[m])
		\vspace{-2mm}
		\end{equation*}
		Here, $W_g, b_g$ are parameters we learn during training the prediction layer, and $Y'=\{y_t'|y_t'\in\{0,1\}, t\leq g\}$ is the final set of sentiment predictions for each utterance where $H_0^K$ is normalized to a probability distribution over binary sentiments using cognitive-based perceptions\vspace{-4mm}. 	
		\subsubsection{Kind Neighbors}
		\label{kind-neighbors}
		\vspace{-1mm}
		Given the node submodels with inadequate users, our goal in this section is to reduce data sparsity. Hence, we employ a non-trivial approach to compensate insufficiency by borrowing similar users from neighboring nodes. Each user within a node can then either be original or marked as adopted from adjacent nodes. Consequently, we propose the dynamic dropout to optimize submodel training.\\
		The sparsity phenomenon occurs when $|n_o^\Phi|$, as the number of users in the subspace of the node $n_o$, deems to be less than threshold $\theta_b$. Accordingly, we can locate the nearest neighbors using the least distance between pairs (Eq. \ref{eq:nodedist}) and continue adopting the users from neighbors as long as the amplitude satisfies the $\theta_b$ constraint. Hence, the higher the dissimilarity between the nodes, the lower the node adaptation chance, and the higher the dropout ratio will be, resulting in a more generalized model in training. Given the set of data enclosed with the adopted users from a neighboring node $n_c$, Eq. \ref{eq:dropprob} calculates the dropout ratio $p^d_{c,o}$ for the given $n_c$ to facilitate training of the submodel of $n_o$\vspace{-2mm}.
 		\begin{equation}
			\small
			\label{eq:dropprob}
			p^d_{c,o}=\dfrac{\underset{\varphi_i,\varphi_j\in n_o^\Phi}{Max}\{\digamma_\Omega^d(\varphi_i,\varphi_j)\}}{\digamma_T^d(n_c,n_o)}\times\lambda
			\vspace{-2mm}
		\end{equation} 
		Here, $\lambda$ is the adjusting parameter to scale the dropout probability. Accordingly, the generalization of the model will be proportionate to the conditional likelihood of the adoption cases for each given original node. We compute the probability by dividing the maximum distance between the surrogate points within $n_o$ sub-space by the distance between $n_o$ and any given neighboring node $n_c$\vspace{-5mm}. 		
		\subsection{Online Phase}
		\vspace{-1mm}
		In the online phase, we evaluate the output sentiment for the given video using a cognitive-based ensemble implemented within an adaptive tree. After leveraging the features of all data channels, we accomplish three procedures to complete the inference process: Cognitive Assessment, Cognitive Tree Routing, and Cognitive-Based Prediction\vspace{-3mm}.
		\subsubsection{Cognitive Assessment}
		\label{sec:Cognitive-Assess}
		In this section, given the input video and the participating user, respectively denoted by $v$ and $v.l$, we utilize the function $\digamma_\Omega^m$ to map the user $v.l$ to a corresponding surrogate-point $\varphi_\bullet$ within the $\Omega$ space (Eq. \ref{digamma_Omega_mv_l})\vspace{-2mm}.
		\begin{equation}
		    \label{digamma_Omega_mv_l}
			\varphi_\bullet=\digamma_\Omega^m(v.l)
			\vspace{-2mm}
		\end{equation}
		This approach is beneficial for cold-start users, where we exploit the cognitive cues and estimate sentiments simultaneously. Moreover, we can benefit from mapping to track a user-specific route within the adaptive tree\vspace{-3mm}.
		\subsubsection{Adaptive Tree Routing}
		\label{sec:CognitiveTreeRouting}
		In this section, we find the most relevant leaf node to the corresponding surrogate point. To this end, we traverse a route from the root toward the objective leaf node in the adaptive tree. Following the categorization approach (Sec. \ref{sec:Cognitive-space-categorization}) considered in the construction process of the tree, for each node $n$ within the traverse procedure, we obtain the similar child node $\ddot{n}$ to the surrogate-point $\varphi_\bullet$.\\
		\textbf{Theoretical-Based Categorization} In this approach, as a prerequisite of fragmentation, for each node $n$ in the adaptive tree, we hold both dimension $c^*$ and corresponding cut-point $c^*_{cut}$ to assign the data to the left or right child node. We then compare the value of $c^*$ at the point $\varphi_\bullet$ versus $c^*_{cut}$ to choose the corresponding child node $\ddot{n}$.\\
		\textbf{Clustering-Based Categorization} Given $N^c$ as the set of child nodes, the clustering-based method computes the distance between $\varphi_\bullet$ and the medoid of the subspace comprising candidate nodes to determine $\ddot{n}$\vspace{-2mm} (Eq. \ref{eq:selectleafcluster}).	
		\begin{equation}
			\small
			\label{eq:selectleafcluster}
			\ddot{n}=\underset{n_\kappa\in {N^c}}{\arg\min}\{\digamma_\Omega^d(n_\kappa^\Phi.\tilde{\varphi},\varphi_\bullet)\}
			\vspace{-2mm}
		\end{equation} 
		Here $n_\kappa^\Phi.\tilde{\varphi}$ is the subspace medoid of $n_\kappa$. In a nutshell, we firstly traverse the adaptive tree from the root toward the specific leaf node in an iterative top-down process and subsequently span the retrieved path in a bottom-up manner to execute the transfer-based submodels\vspace{-4mm}.
		\subsubsection{Cognitive-Based Prediction}
		\vspace{-1mm}
		Algorithm \ref{alg:Predicttestdata} estimates cognitive-based sentiments for the input video $v$, comprising four steps: feature extraction, cognitive assessment, adaptive tree routing, and prediction.
		\vspace{-6mm}
		\begin{algorithm}[H]
			\caption{Prediction In Online Phase}
			\label{alg:Predicttestdata}
			\textbf{Input:} {v}\\
			\textbf{Output:} {s}
			\begin{algorithmic}[1]
				\STATE $X=\emptyset, H=\emptyset, S=\emptyset, P=\emptyset $
				\FORALL{$m\:in\:\{A,V,T\}$}
				\FORALL{$u_j\in v$}
				\STATE $x_j^m = ExtractFeatures(u_j^m$)
				\ENDFOR
				\ENDFOR
				\STATE $\varphi_\bullet=\digamma_\Omega^m(v.l)$
				\STATE $\ddot{n}=n_0$
				\WHILE{$\ddot{n}\neq\:Null$}
				\STATE $S.Push(\ddot{n})$
				\STATE $\ddot{n}=SelectChildNode(\ddot{n},\varphi_\bullet)$ 
				\ENDWHILE
				\WHILE{$S\neq\emptyset$}
				\STATE $\ddot{n}=S.Pop()$
				\STATE $\ddot{n}^k.append(H)$
				\STATE $H = \ddot{n}^m(\{X^A,X^V,X^T\},\ddot{n}^k)$
				\ENDWHILE
				\STATE $Y' = Predict(H)$ \COMMENT{Eq. \ref{eq:predictionmodule}}
				\STATE $s = MajVote(Y')$
				\STATE return s
			\end{algorithmic}
		\end{algorithm}
		\vspace{-4mm}
		\noindent Given video $v$ with the set of utterances $v=[u_1,\dots,u_g]$ padded to the length $g$ (Eq. \ref{eq:MaxUttPerVid}), we first extract the modal-specific feature vectors $X^m\in\mathbb{R}^{d_m\times g}$ for all utterances, with $m\in(A,V,T)$ as the input modality and $d_m$ as the vector length. We then put forward the function $\digamma_\Omega^m$ to map the participating user $v.l$ to a point $\varphi_\bullet$ in the $\Omega$.\\
		Consequently, we devise a novel stack-wise approach to convey the routing process. Aiming to build the path for the given user, we can trace a hierarchy from the root toward the user-designated leaf node by obtaining similar child nodes and \textit{pushing} them into stack $S$.\\ 
		Conversely, we process the set of feature vectors ($X^A$, $X^V$, and $X^T$) via a bottom-up approach using the pertinent route from $S$ to predict the sentiments of the input $v$. To this end, we iteratively \textit{pop} a node $\ddot{n}$ from $S$ and append the submodel output from the previous iteration to the \textit{knowledge core} $\ddot{n}^k$, following by executing the node \textit{submodel} $\ddot{n}^m$ on the input feature vectors and the enclosed $\ddot{n}^k$.\\
		Finally, by executing the root submodel in the last iteration, the \textit{predict} module utilizes the submodel output of each utterance as $H\in\mathbb{R}^{\tilde{d}\times g}$ to obtain a set of predictions for each utterance $Y'$. The majority voting on $Y'$ can attain the final cognitive-based sentiment estimation $s$ for $v$\vspace{-5mm}.
		\section{Experiment}
		\label{experiments}
		\vspace{-1mm}
		We conducted extensive experiments on real-world datasets \cite{Majumder2018} to evaluate our proposed model in multimodal sentiment analysis. We developed our algorithms using Tensorflow, Scikit-learn, and Seaborn. Additionally, we performed the experiments on a server with 2.60GHz Intel Core i7-6700HQ CPU, NVidia GeForce 1080 GPU, and 64GB of RAM (available to download \footnote{\url{https://sites.google.com/view/multi-modalsentimentanalysis}})\vspace{-5mm}.		
		\subsection{Configuration}
		\vspace{-1mm}
		\subsubsection{Data}
		\label{dataset}
		\vspace{-1mm}
		For the experiments, we used two publicly available datasets \cite{Majumder2018} from online video-sharing platforms. Table \ref{tab:datasetsinfo} elucidates the corresponding dataset features\vspace{-3mm}.
		\begin{table}[H]
			\def\arraystretch{1.4}
			\begin{tabular}{  c  c  c  c c }
				\hline
				& \# of Users & \# of Videos & Avg Utterance& Density \\ \hline
				CMU-MOSI & 93 & 93 & 24&96.4\% \\ \hline
				MOUD & 55 & 79 & 6&97.6\% \\ \hline
			\end{tabular}
			\vspace{-2mm}
			\caption{Datasets Statistical Description }
			\label{tab:datasetsinfo}
			\vspace{-4mm}
		\end{table}
		\noindent\textbf{CMU-MOSI} \cite{Zadeh2016} includes the multimodal reviews on movie contents in English, where each video comprises multiple segments as utterances. Multiple annotators assign labels between -3 and +3  to utterances. We make the labels binary (positive and negative) as required in experiments.\\
		\textbf{MOUD} \cite{PerezRosas2013} includes video product reviews in the Spanish language. The utterances labeled in neural, positive, and negative classes similarly project into binary values\vspace{-3mm}.
		\subsubsection{Benchmark}
		\label{benchmark}
		\vspace{-1mm}
		Aiming to assess the multimodal sentiment prediction, we tested the statistical hypothesis on the distinctive train and test users for both datasets. We define the statistical parameters as follows: For each utterance $u_j$ composed by a user $c_i$, we determined True Positive when $c_i$ expressed positive sentiment on $u_j$ while the model predicted the same positive sentiment, False Positive when $c_i$ expressed negative sentiment on $u_j$ while the model predicted positive, True Negative when $u_j$ gained negative sentiment while the model predicted the same sentiment, and False Negative when $u_j$ gained positive sentiment while the model predicted the opposite. Finally, we compute the performance metrics of Precision, Recall, and F-measure to compare competitors in the multimodal sentiment analysis\vspace{-3mm}.
		\subsubsection{Baselines}
		\label{baselines}
		\vspace{-1mm}
		This section lists the baselines in multimodal sentiment analysis in both cognitive and non-cognitive categories.\\
		\textbf{CHF} This hierarchical framework\cite{Majumder2018} handles three input modals of acoustics, visual, and textual to compute the sentiments in a hierarchy of uni, bi, and tri-modal layers. \\
		\textbf{CAF} This model\cite{Poria2017a} performs the context-dependent sentiment analysis on utterances using an LSTM network beside an attention-based mechanism to handle fusion.\\
		\textbf{SEP} Similar to \cite{Feuz2015}, this ensemble model utilizes majority voting on base models. The median divides cognitive cues to train submodels on high and low splits.\\
		\textbf{CTP} Our proposed cognitive-based sentiment analysis approach that leverages an attention-based fusion to transfer the agglomerative knowledge within the adaptive tree. As explained in Sec. \ref{sec:Cognitive-space-categorization}, this method constructs the adaptive tree through cluster-based categorization of users.\\
		\textbf{ITP} As elucidated in Sec. \ref{sec:Cognitive-space-categorization}, this approach replicates CTP where it conversely builds the adaptive tree via a theoretical-based categorization\vspace{-3mm}.
		\subsubsection{Data Cognitive Representation}
		\label{sec:data-cognitive-represent}
		One of the most notable points of our framework is that the changes in cognitive features can substantially affect the sentiment analysis results. As elucidated in Sec. \ref{Cognitive_Inference2}, we map each individual to the $\Omega$ space by $\digamma_\Omega^m$ function, resulting in unprecedented cognitive cues.
		The standard density function of underlying distributions can compare the cognitive space of various cues to include OPN, CON, EXT, AGR, and NEU, representing openness, consciousness, extroversion, agreeableness, and neuroticism.
		\begin{figure}[H]
			\vspace{-5mm}
			\centering
			\includegraphics[width=0.8\linewidth]{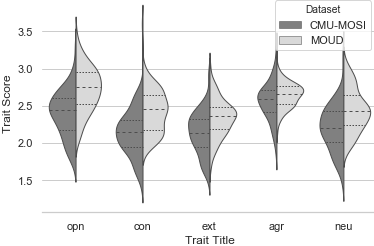}
			\vspace{-3mm}
			\caption{Cognitive cue annotation distribution on MOSI dataset}
			\label{fig:cogcuedistribution}
			\vspace{-3mm}
		\end{figure}
		\noindent As Fig. \ref{fig:cogcuedistribution} depicts, the distribution of cognitive cues, reflected by the users, follows the Gaussian distribution in both datasets, an indication of high impurity that is more centralized for AGR property. Such distribution can constitute a significant contrast between samples that can further facilitate the categorization of users. Compared to MOSI, the cognitive values pertinent to OPN, EXT, and NEU are slightly higher in MOUD. Such traits concerning the sentiment intensity and duration can directly affect emotion manifestation \cite{Verduyn2012}. Nevertheless, we will further need to investigate if categorizing the users can result in meaningful differences between datasets.
		\vspace{-5mm}
		\subsection{Effectiveness}
		\subsubsection{Categorization Performance}
		\label{sec:CategorizationPerformance}
		In this section, we perform an intrinsic evaluation of the clustering-based (CTP) and information-based (ITP) categorization in the adaptive tree (Sec. \ref{sec:Cognitive-space-categorization}).\\
		To examine the cluster quality, we utilize both Silhouette and Davies-Bouldin metrics \cite{Najafipour2020}, reporting the average scores for each level in the hierarchical categorization. Initially, we observe that the more profound the hierarchy grows toward the leaves (level 4), the lower the silhouette score (Fig. \ref{fig:CatPerf}). By contrast, the Davis-Bouldin index, an evaluation based on the cluster centroids, does not pursue a consistent behavior for ITP, reflecting an immense loss in the second level and highlighting a sparse division due to the limitedness of the ITP in generating the split pairs\vspace{-5mm}. 
		\begin{figure}[H]
			\centering
			\begin{subfigure}{0.49\linewidth}
				\centering
				\includegraphics[width=\textwidth]{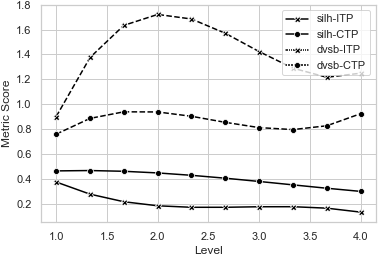}
				\vspace{-5mm}
				\caption{MOSI}
				\label{fig:CatPerf-mosi}
			\end{subfigure}
			\centering
			\begin{subfigure}{0.49\linewidth}
				\centering
				\includegraphics[width=\textwidth]{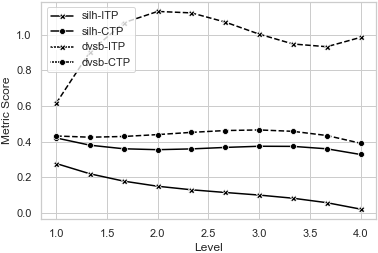}
				\vspace{-5mm}
				\caption{MOUD}
				\label{fig:CatPerf-moud}
			\end{subfigure}
			\hfill
			\vspace{-3mm}
			\caption{Evaluating clusters by Silhouette and Davies Bouldin}
			\vspace{-6mm}
			\label{fig:CatPerf}
		\end{figure}
		\noindent Finally, given higher values for Silhouette (solid lines) and lower numbers for Davis-Bouldin (dashed lines) in Fig. \ref{fig:CatPerf} proves that compared to ITP, the divisions conveyed by the CTP model can better obtain clusters.
		\subsubsection{Impact Of Trilateral Criteria}
		To attain optimal growth in the adaptive tree, as elucidated in Sec. \ref{sec:Cognitive-space-categorization}, we employ the trilateral criteria, proposing three exclusive constraints denoted by $\theta_p$, $\theta_e$, and $\theta_a$. Nevertheless, the more significant the tree growth, the more inevitable efficiency loss will be. Hence, we are to seek thresholds that, on the one hand, can improve the effectiveness of the model, increasing the accuracy, and on the other hand, can preserve the efficiency with a legitimate growth in the adaptive tree.
		\begin{figure}[H]
			\vspace{-3mm}
			\centering
			\begin{subfigure}{0.49\linewidth}
				\centering
				\includegraphics[width=1\textwidth]{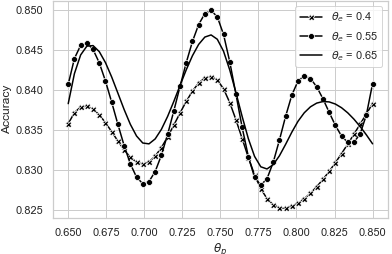}
				\vspace{-6mm}
				\caption{Accuracy}
				\label{fig:CriteriaAccuracyComparison}
			\end{subfigure}
			\centering
			\begin{subfigure}{0.49\linewidth}
				\centering
				\includegraphics[width=1\textwidth]{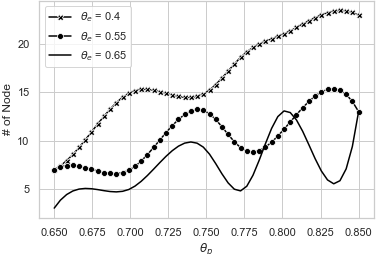}
				\vspace{-6mm}
				\caption{Tree Volume}
				\label{fig:CriteriaVolumeComparison}
			\end{subfigure}
			\hfill
			\vspace{-3mm}
			\caption{Trilateral Thresholds Comparison}
			\label{fig:trilateralcomparison}
			\vspace{-5mm}
		\end{figure}
		\noindent Fig. \ref{fig:trilateralcomparison} collectively examines the impact of recuperation ($\theta_p$) and impurity ($\theta_e$) on the accuracy and the tree volume. The accuracy measures in Fig. \ref{fig:CriteriaAccuracyComparison} show where $\theta_p$ ranges between [0.725, 0.75], $\theta_e$ reflects a positive spike, with better performance for the values of 0.55 and 0.65 where the latter demonstrates more robustness compared to the former.\\
		More specifically, as depicted in Fig. \ref{fig:CriteriaVolumeComparison}, we notice that compared to 0.55, setting $\theta_e$ to 0.65 can better control the size of the tree, nominating $\theta_e=0.65$ as the best fit threshold. In this way, we elegantly sacrifice the negligible change in effectiveness to gain a significant improvement in efficiency.\\
		Fig. \ref{fig:trilateralcomparison3d} further analyzes the selected intervals on the accuracy and tree volume in a more fine-grained view. Finally, if maximizing the performance of the proposed framework in effectiveness and efficiency is of indication, the optimal values of 0.74 and 0.6 for $\theta_p$ and $\theta_e$ can meet the preferences.
		\begin{figure}[H]
			\vspace{-4mm}
			\centering
			\begin{subfigure}{0.49\linewidth}
				\centering
				\includegraphics[width=1\textwidth]{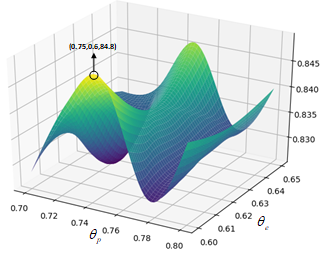}
				\vspace{-5mm}
				\caption{Accuracy}
				\label{fig:CriteriaAccuracyComparison3d}
			\end{subfigure}
			\centering
			\begin{subfigure}{0.49\linewidth}
				\centering
				\includegraphics[width=1\textwidth]{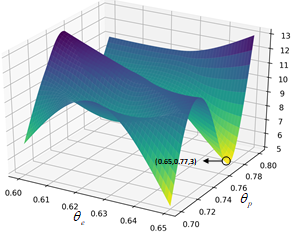}
				\vspace{-5mm}
				\caption{Tree Volume}
				\label{fig:CriteriaVolumeComparison3d}
			\end{subfigure}
			\hfill
			\vspace{-3mm}
			\caption{Trilateral Thresholds Comparison}
			\label{fig:trilateralcomparison3d}
			\vspace{-5mm}
		\end{figure}
		\vspace{-7mm}
		\subsubsection{Impact Of Transferring In The Tree}
		\label{sec:ImpactOfTransfer}
		One notable point about our framework is that it can investigate how the hierarchical transferring of the knowledge from child submodels to the corresponding parent nodes can improve the performance of the sentiment prediction procedure. To empirically track the evidence, we conducted an experiment to examine statistical measures during a trace from leaf nodes toward the root, performed on the cognitive tree.
		More specifically, given the test set associated with node $n$ (Sec. \ref{sec:CognitiveTreeRouting}), we executed sentiment prediction on the submodel $n^m$ using CTP. Finally, we used the average F1-score for all nodes on the same level of the cognitive tree, reported by Fig. \ref{fig:cogtree-leveleval} for MOSI and MOUD datasets of the respective 5 and 3 levels. The higher the level in the hierarchy, the more comprehensive the knowledge core will be from transferring the submodel outputs of the lower levels, resulting in an overall improvement in the tree\vspace{-3mm}.
		\begin{figure}[H]
			\centering
			\begin{subfigure}{0.49\linewidth}
				\centering
				\includegraphics[width=1\textwidth]{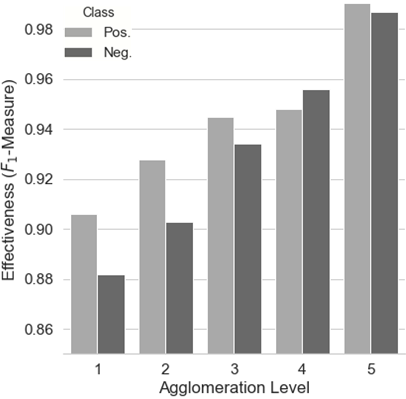}
				\vspace{-5mm}
				\caption{MOSI}
				\label{fig:cogtree-leveleval-mosi}
			\end{subfigure}
			\centering
			\begin{subfigure}{0.49\linewidth}
				\centering
				\includegraphics[width=1\textwidth]{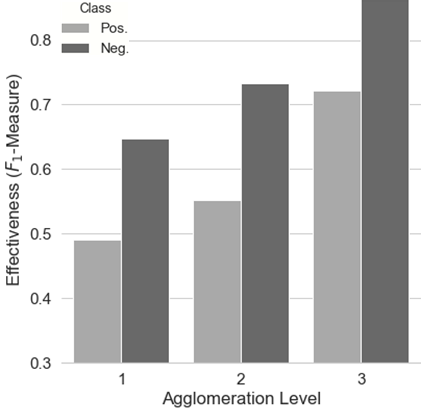}
				\vspace{-5mm}
				\caption{MOUD}
				\label{fig:cogtree-leveleval-moud}
			\end{subfigure}
			\hfill
			\vspace{-3mm}
			\caption{Level-based comparison of F1 in the cognitive tree}
			\label{fig:cogtree-leveleval}
			\vspace{-4mm}
		\end{figure}
		\noindent Furthermore, the expected improvement ratio in the MOUD dataset is more than MOSI, affirming the inference of Sec. \ref{sec:data-cognitive-represent}, where the general cognitive differences imply that the sentiment specifics, including density and duration, vary more for users in the MOUD dataset. Hence, due to a better contrast between users in the MOUD dataset, the cognitive categorization and subsequent transferring modules have substantially improved the initial predictions\vspace{-3mm}.
		\subsubsection{Effectiveness of multimodal sentiment analysis}	
		\label{sec:Effectiveness-of-multimodal-sentiment-analysis}
		Given the benchmark in Sec. \ref{benchmark}, our goal is to examine the competitors' performance (Sec. \ref{baselines}) in multimodal sentiment prediction. Unlike CHF and CAF, the other three, SEP, ITP, and CTP, are cognitive-aware in sentiment inference procedure. Table \ref{tab:baselinecomp} displays the statistics of an average 5-fold test. While $rc^+$ and $rc^-$ report the recall for the respective positive and negative data classes, the pc and $F_1$ represent the precision and F1-score metrics.
		\begin{table}[H]
			\vspace{-3mm}
			\scriptsize
			\centering			
			\caption{Baseline Comparison}
			\vspace{-3mm}
			\def\arraystretch{1.5}
			\begin{tabular}{c|c|c|c|c|c|c|c|c|}
				\cline{2-9}
				& \multicolumn{4}{c|}{MOSI}       & \multicolumn{4}{c|}{MOUD}       \\ \cline{2-9} 
				& pc    & $rc^+$ & $rc^-$ & $F_1$ & pc    & $rc^+$ & $rc^-$ & $F_1$ \\ \hline
				\multicolumn{1}{|c|}{CHF} & 0.865 & 0.862  & 0.869  & 0.865 & 0.694 & \textbf{0.534}  & 0.793  & 0.686 \\ \hline
				\multicolumn{1}{|c|}{CAF} & 0.882 & 0.875  & 0.893  & 0.887 & 0.699 & 0.504  & 0.807  & 0.644 \\ \hline
				\multicolumn{1}{|c|}{SEP} & 0.889 & 0.879  & 0.897  & 0.888 & 0.702 & 0.506  & 0.813  & 0.661 \\ \hline
				\multicolumn{1}{|c|}{ITP} & \textbf{0.895} & \textbf{0.882}  & \textbf{0.9}  & \textbf{0.893} & \textbf{0.717} & 0.505  & \underline{\textbf{0.88}}   & \textbf{0.683} \\ \hline
				\multicolumn{1}{|c|}{CTP} & \underline{\textbf{0.898}} & \underline{\textbf{0.885}}  & \underline{\textbf{0.904}}  & \underline{\textbf{0.896}} & \underline{\textbf{0.727}} & \underline{\textbf{0.542}}  & \textbf{0.844}  & \underline{\textbf{0.695}} \\ \hline
			\end{tabular}
			\vspace{-3mm}
			\label{tab:baselinecomp}
		\end{table}
		\noindent For the MOSI dataset, the cognitive-oriented approaches, including ITP and CTP, significantly improve the performance of non-cognitive methods. Nevertheless, while CTP unanimously gains the best performance in all metrics, the rate of improvement among cognitive-aware models (SEP, ITP, and CTP) is less than 1\%.\\
		More specifically, the MOUD dataset reveals some latent aspects in performance concerning cognitive methods. Given the F1 measure as the main metric to signify the best performance, CTP overpasses other competitors, including ITP. We note that while the number of positive data instances dominates negative data samples, the second-best method, ITP,  plunges up to 4\% less than other rivals (CHF and CTP). Therefore, based on the nature of the dataset, we can rely on 0.695 instead of 0.683 to gain up to 4\% of improvement for positive recall, sacrificing the negligible improvement for the limited negative data samples. It worths mentioning that using ITP in a converse manner can attain up to 4\% of improvement for negative recall when the positive samples may flatten compared to negative data instances. \\
		Since the recommended approaches of ITP and CTP are the primary attempts in utilizing cognitive cues in the multimodal sentiment prediction, we compare the results with non-cognitive baselines. Inherently, we observe the advantage of considering cognitive cues in improving the overall performance, where compared to non-cognitive-based counterparts, the cognitive-based approaches show an average of 2\% improvement in the $F_1$ measure. In a nutshell, the results signify that using cognitive cues can maximize the performance on the negative data class, reflecting an average of 3.6\% increase in the recall.\\
		More specifically, we probe the role of agglomerative ensembles in cognitive-aware methods (ITP, CTP, and SEP). While benefitting the MOUD dataset more, compared to SEP, the $F_1$-measures on ITP and CTP get enhanced significantly in both datasets. The following affirms the empirical results in Sec. \ref{sec:ImpactOfTransfer} that compared to MOSI, the ratio of improvement in each step of agglomeration is considerably higher for MOUD. On the one hand, our proposed CTP framework is capable of creating cohesive splits with higher separation in the adaptive tree (Sec. \ref{sec:CategorizationPerformance}), and on the other hand, by maintaining a relatively higher performance compared to ITP, it is capable of effectively distinguishing the users by pertinent cognitive cues in their brief contents. To conclude, the CTP model overpasses ITP both in the intrinsic and extrinsic analysis, proving that the clustering-based adaptive trees can better track cognitive-aware sentiments\vspace{-4mm}.	
		\subsubsection{Multimodal Sentiment Analysis Complexity}
		\vspace{-1mm}
		In this section, we evaluate the time complexity of our framework from a theoretical computer science perspective. The proposed framework comprises offline and online components, where the former supplies the output of the overall training to the latter, justifying more time consumption by the offline section. Henceforth, we first report the time complexity of the offline components for both CTP and ITP approaches, followed by complexity intuitions about the online section. In retrospect, Eq.\ref{eq:TimeComOverall} verbalizes the overall complexity through aggregating the comprising times in the framework\vspace{-1mm}.
		\begin{equation}
			\small
			\label{eq:TimeComOverall}
			T=T_f+T_c+T^t_b+T^t_m+T_p
			\vspace{-1mm}
		\end{equation}
		Here, $T_f$, $T_c$, $T^t_b$, $T^t_m$, and $T_p$ respectively denote the time complexities of the procedures, including feature extraction, cognitive cue annotation, top-down tree generation, bottom-up model construction, and the final prediction training. Due to the insignificance effect of the feature extraction and cognitive cue annotation modules, the relative time complexities of $T_f$ and $T_c$ fade in contrast with the summation of the other efficiency metrics, Eq. \ref{eq:TimeComOverall}. To this end, we further examine the complexity of $T^t_b$, $T^t_m$, and $T_p$.\\
		\textbf{Tree construction:} $T^t_b$ denotes the complexity for top-down construction of the adaptive tree. Given each node $n$ with $m$ instances, a three-fold set of actions are involved in construction: fragmentation parameter setting $T^t_{par}$, subspace partitioning $T^t_{frag}$, and trilateral termination criteria $T^t_{stop}$.\\ 
		$T^t_{stop}$ is the time requirements to control all the parameters in the worst-case. Given $H$ as the set of splits for the node $n$, we can compute $T^t_{stop}$ by the summation of three complexities, including amplitude $O(|H|\times 1)$, impurity $O(|H|\times 2\times 1)$, and recuperation $O(2\times |H|\times m)$ where the overall complexity for $T^t_{stop}$ can simplify as $O(m)$.\\
		Since the complexity $T^t_{par}$ referring to the parameter setting differs for ITP and CTP models, on the one hand, we select the dimension and the respective cut-point of the node subspace for ITP, and on the other hand, we select the number of clusters in each node for CTP. Correspondingly, the computation complexity of ITP equates to the summation of selection complexities for dimension $O({|C|}^2\times m^2)$ and the cut-point  $O(k\times m^2)$, where $|C|$ and $k$ are respectively the number of dimensions in $\Omega$ space and candidate points in the selection process. Given 5 and 10 as the respective values for $|C|$ and $k$, we can simplify  $T^t_{par}$ to $O(m^2)$ in the ITP model. Similarly, given Eq. \ref{eq:clusternum}, we can obtain the computation complexity for CTP by selecting an optimum number of children as $O(k\times m^2)$, where $k$ counts the constant number of candidates, ending with $O(m^2)$ as the time requirements of $T^t_{par}$.\\
		Finally, we can respectively attain $T^t_{frag}$ for ITP and CTP models using $O(m)$ and $O(k\times m)$, where $k$ denotes the constant number of clusters, resulting in the same complexity of $O(m)$ for both. For ease of computation, the construction complexity in each node can be calculated by the aggregation of the three-fold actions, equating to $O(m+m^2+m)$, where approximately is further simplified to $O(m^2)$. Eq. \ref{eq:TimeComTreeCons} formalizes the complexity of the tree construction ($T^t_b$)\vspace{-1mm}.
		\begin{equation}
			\small
			\label{eq:TimeComTreeCons}
			T^t_b\approx O(m_{01}^2)+O(m_{11}^2+m_{12}^2+\dots)+\dots
			\vspace{-1mm}
		\end{equation}
		Here, we aggregate the computational complexity for the nodes in each level. We further notice that the number of instances in each node is less than the total count in the root $m_{01}$. Where we suppose the criterion $\sum_j m_{ij}\leq m_{01}$ for each level $i$, we can approximately conclude Eq. \ref{eq:TimeComTreeCons} as $T^t_b\approx O(l\times m_{01}^2)$. Accordingly, because of limited capacity for the growth of the adaptive tree, we can disregard $l$, the number of levels, in the final complexity.\\
		\textbf{Tree submodel construction:} Given the training procedure in LSTM models, we assume that the time required for each weight in a time step is $O(1)$. Hence, concerning $W$ as the total number of training parameters, the complexity can be designated as $O(W)$ in each time step. As verbalized in Eq. \ref{eq:TimeComLstmWeights}, $W$ can disregard the biases and include three final parameters of memory cells, input, and output units, respectively denoted by $d_{mc}$, $d_i$, and $d_o$\vspace{-1mm}. 
		\begin{equation}
			\small
			\label{eq:TimeComLstmWeights}
			W=4\times d_{mc}+4\times d_{mc}\times d_i+ d_{mc}\times d_o+3\times d_{mc}
			\vspace{-1mm}
		\end{equation}
		To this end, for a node $n$ with $m$ instances, $O(W\times e\times m\times t)$ can represent the time for training an LSTM, where $e$ is the number of epochs and $t$ is the time steps, equating to the number of utterances within each video (Eq. \ref{eq:MaxUttPerVid}). Both $e$ and $t$ are constant values and can be dismissed \cite{Yuan2020}.\\
		Concerning the same input length, the complexity of the four LSTMs will be identical to $O(W\times m)$. As Eq. \ref{eq:TimeComSubModelCons} elucidates the total summation of the complexity for the proposed adaptive tree $T^t_m$\vspace{-2mm}.
		\begin{equation}
			\small
			\label{eq:TimeComSubModelCons}
			T^t_m\approx O(W\times m_{01})+O(W\times m_{11}+W\times m_{12}+\dots)+\dots
			\vspace{-2mm}
		\end{equation}
		Likewise, we can put a proposition forward to simplify the complexity as $O(l\times W\times m_{01})$.\\
		\textbf{Prediction module:} converting an R-dimensional input to an S-dimensional output demands a time requirement of $O(R\times S)$. Here, the training time $T_p$ will be linear and equal to $O(m_{01})$ with $m_{01}$ as the total number of instances.\\
		Finally, following Eq. \ref{eq:TimeComOverall}, the overall running-time complexity for the offline section will be the same for both CTP and ITP approaches, formalized by Eq. \ref{eq:TimeComOverall2}\vspace{-2mm}:		
		\begin{equation}
			\small
			\label{eq:TimeComOverall2}
			T\approx O(l\times m_{01}^2)+O(l\times W\times m_{01})+O(m_{01})
			\vspace{-2mm}
		\end{equation}
		By employing simplification on the above equation, the computational complexity of the offline section in the framework can be related to $T\approx O(m_{01}^2)$.\\
		Likewise, the time complexity for the online section will be linear with the order of $O(\acute{m})$, where $\acute{m}$ counts the number of input instances. Since the time-sensitive scenarios, including human-computer interaction, can adopt multimodal affective analysis, the running time of the online module deems quite crucial. Therefore, the efficient polynomial approach proposed in this paper can be significantly useful in the practical requirements of real-time frameworks\vspace{-4mm}.
		\section{Conclusion}
		\label{conclusion}
		\vspace{-1mm}
		In this work, we propose a novel unified framework to predict multimodal sentiments by consuming the cognitive cues in user contents. In summary, to leverage individual characteristics, we firstly map each individual to a cognitive space based on exploited latent cues and subsequently perform an agglomerative ensemble to carry out prediction.  Aiming to form the ensemble, we generate the novel adaptive tree by a hierarchical categorization of individuals in the given cognitive space that is further followed by developing transferring-based submodels.  Correspondingly, we employ both clustering and theoretical-based models to classify individuals. Aiming to avoid data sparsity caused by the construction of submodels, we devise a non-trivial deep learning approach based on a dynamic dropout strategy to borrow data from neighboring nodes. Finally, the prediction module leverages the ensemble outcome of the adaptive tree to analyze sentiments, deciding whether the multi-media contents reflect a positive or a negative attitude.\\
		The experimental results on two real-world datasets reveal that leveraging latent cognitive cues can enhance multimodal sentiment analysis, the reason why the proposed solution in this paper outperforms other trending approaches. From another perspective, the proposed agglomerative ensemble can better foster the impact of cognitive cues in sentiment analysis via altering the theoretical-based module by a clustering-based categorization technique. To continue, we can integrate adversarial networks into the adaptive tree to further compensate for data incompleteness and incorporate the personality effects into the fusion-based module to support the attention function. We leave these tasks to tackle in the future\vspace{-4mm}.
		\bibliographystyle{IEEEtran}
		\bibliography{IEEEexample}

\begin{thebibliography}{10}
\providecommand{\url}[1]{#1}
\csname url@samestyle\endcsname
\providecommand{\newblock}{\relax}
\providecommand{\bibinfo}[2]{#2}
\providecommand{\BIBentrySTDinterwordspacing}{\spaceskip=0pt\relax}
\providecommand{\BIBentryALTinterwordstretchfactor}{4}
\providecommand{\BIBentryALTinterwordspacing}{\spaceskip=\fontdimen2\font plus
\BIBentryALTinterwordstretchfactor\fontdimen3\font minus
  \fontdimen4\font\relax}
\providecommand{\BIBforeignlanguage}[2]{{%
\expandafter\ifx\csname l@#1\endcsname\relax
\typeout{** WARNING: IEEEtran.bst: No hyphenation pattern has been}%
\typeout{** loaded for the language `#1'. Using the pattern for}%
\typeout{** the default language instead.}%
\else
\language=\csname l@#1\endcsname
\fi
#2}}
\providecommand{\BIBdecl}{\relax}
\BIBdecl

\bibitem{Poria2017}
S.~Poria, E.~Cambria, R.~Bajpai, and A.~Hussain, ``A review of affective
  computing: From unimodal analysis to multimodal fusion,'' \emph{Information
  Fusion}, vol.~37, pp. 98--125, 2017.

\bibitem{PerezGaspar2016}
L.-A. Perez-Gaspar, S.-O. Caballero-Morales, and F.~Trujillo-Romero,
  ``Multimodal emotion recognition with evolutionary computation for
  human-robot interaction,'' \emph{Expert Systems with Applications}, vol.~66,
  pp. 42--61, 2016.

\bibitem{Ebrahimi2017}
M.~Ebrahimi, A.~H. Yazdavar, and A.~Sheth, ``Challenges of sentiment analysis
  for dynamic events,'' \emph{IEEE Intelligent Systems}, vol.~32, no.~5, pp.
  70--75, 2017.

\bibitem{Li2016}
H.~Li, J.~Cui, B.~Shen, and J.~Ma, ``An intelligent movie recommendation system
  through group-level sentiment analysis in microblogs,''
  \emph{Neurocomputing}, vol. 210, pp. 164--173, 2016.

\bibitem{Ren2018}
R.~Ren, D.~D. Wu, and T.~Liu, ``Forecasting stock market movement direction
  using sentiment analysis and support vector machine,'' \emph{IEEE Systems
  Journal}, vol.~13, no.~1, pp. 760--770, 2018.

\bibitem{Zadeh2016}
A.~Zadeh, R.~Zellers, E.~Pincus, and L.-P. Morency, ``Multimodal sentiment
  intensity analysis in videos: Facial gestures and verbal messages,''
  \emph{IEEE Intelligent Systems}, vol.~31, no.~6, pp. 82--88, 2016.

\bibitem{PerezRosas2013}
V.~Pérez-Rosas, R.~Mihalcea, and L.-P. Morency, ``Utterance-level multimodal
  sentiment analysis,'' in \emph{Proc. of the 51st Annual Meet. of the
  Association for Computational Linguistics (Vol. 1)}, 2013, pp. 973--982.

\bibitem{Busso2008}
C.~Busso, M.~Bulut, C.-C. Lee, A.~Kazemzadeh, E.~Mower, S.~Kim, J.~N. Chang,
  S.~Lee, and S.~S. Narayanan, ``Iemocap: Interactive emotional dyadic motion
  capture database,'' \emph{Language resources and evaluation}, vol.~42, no.~4,
  p. 335, 2008.

\bibitem{Zadeh2017}
A.~Zadeh, M.~Chen, S.~Poria, E.~Cambria, and L.-P. Morency, ``Tensor fusion
  network for multimodal sentiment analysis,'' in \emph{Proc. of the Conf. on
  Empirical Methods on Natural Language Processing (EMNLP)}, 2017.

\bibitem{Poria2017a}
S.~Poria, E.~Cambria, D.~Hazarika, N.~Majumder, A.~Zadeh, and L.-P. Morency,
  ``Context-dependent sentiment analysis in user-generated videos,'' in
  \emph{Proc. of the 55th annual Meet. of the association for computational
  linguistics (Vol. 1)}, 2017, pp. 873--883.

\bibitem{Majumder2018}
N.~Majumder, D.~Hazarika, A.~Gelbukh, E.~Cambria, and S.~Poria, ``Multimodal
  sentiment analysis using hierarchical fusion with context modeling,''
  \emph{Knowledge-based systems}, vol. 161, pp. 124--133, 2018.

\bibitem{Huddar2018}
M.~G. Huddar, S.~S. Sannakki, and V.~S. Rajpurohit, ``An ensemble approach to
  utterance level multimodal sentiment analysis,'' in \emph{2018 Inter. Conf.
  on Computational Techniques, Electronics and Mechanical Systems
  (CTEMS)}.\hskip 1em plus 0.5em minus 0.4em\relax IEEE, 2018, pp. 145--150.

\bibitem{Najafipour2020}
S.~Najafipour, S.~Hosseini, W.~Hua, M.~R. Kangavari, and X.~Zhou, ``Soulmate:
  Short-text author linking through multi-aspect temporal-textual embedding,''
  \emph{IEEE Trans. on Knowledge and Data Engineering}, 2020.

\bibitem{Hosseini2020}
S.~Hosseini, S.~Najafipour, N.-M. Cheung, H.~Yin, M.~R. Kangavari, and X.~Zhou,
  ``Teags: time-aware text embedding approach to generate subgraphs,''
  \emph{Data Mining and Knowledge Discovery}, vol.~34, pp. 1136--1174, 2020.

\bibitem{Hua2016}
W.~Hua, Z.~Wang, H.~Wang, K.~Zheng, and X.~Zhou, ``Understand short texts by
  harvesting and analyzing semantic knowledge,'' \emph{IEEE transactions on
  Knowledge and data Engineering}, vol.~29, no.~3, pp. 499--512, 2016.

\bibitem{Song2020}
C.~Song, X.-K. Wang, P.-f. Cheng, J.-q. Wang, and L.~Li, ``Sacpc: A framework
  based on probabilistic linguistic terms for short text sentiment analysis,''
  \emph{Knowledge-Based Systems}, vol. 194, p. 105572, 2020.

\bibitem{Sun2017}
Y.~Sun and G.~Wen, ``Ensemble softmax regression model for speech emotion
  recognition,'' \emph{Multimedia Tools and Applications}, vol.~76, no.~6, pp.
  8305--8328, 2017.

\bibitem{Kessous2010}
L.~Kessous, G.~Castellano, and G.~Caridakis, ``Multimodal emotion recognition
  in speech-based interaction using facial expression, body gesture and
  acoustic analysis,'' \emph{Journal on Multimodal User Interfaces}, vol.~3,
  no. 1-2, pp. 33--48, 2010.

\bibitem{Corneanu2016}
C.~A. Corneanu, M.~O. Simón, J.~F. Cohn, and S.~E. Guerrero, ``Survey on rgb,
  3d, thermal, and multimodal approaches for facial expression recognition:
  History, trends, and affect-related applications,'' \emph{IEEE Trans. on
  pattern analysis and machine intelligence}, vol.~38, no.~8, pp. 1548--1568,
  2016.

\bibitem{Eyben2010}
F.~Eyben, M.~Wöllmer, A.~Graves, B.~Schuller, E.~Douglas-Cowie, and R.~Cowie,
  ``On-line emotion recognition in a 3-d activation-valence-time continuum
  using acoustic and linguistic cues,'' \emph{Journal on Multimodal User
  Interfaces}, vol.~3, no. 1-2, pp. 7--19, 2010.

\bibitem{Woellmer2013}
M.~Wöllmer, F.~Weninger, T.~Knaup, B.~Schuller, C.~Sun, K.~Sagae, and L.-P.
  Morency, ``Youtube movie reviews: Sentiment analysis in an audio-visual
  context,'' \emph{IEEE Intelligent Systems}, vol.~28, no.~3, pp. 46--53, 2013.

\bibitem{Evangelopoulos2013}
G.~Evangelopoulos, A.~Zlatintsi, A.~Potamianos, P.~Maragos, K.~Rapantzikos,
  G.~Skoumas, and Y.~Avrithis, ``Multimodal saliency and fusion for movie
  summarization based on aural, visual, and textual attention,'' \emph{IEEE
  Trans. on Multimedia}, vol.~15, no.~7, pp. 1553--1568, 2013.

\bibitem{Chanel2011}
G.~Chanel, C.~Rebetez, M.~Bétrancourt, and T.~Pun, ``Emotion assessment from
  physiological signals for adaptation of game difficulty,'' \emph{IEEE Trans.
  on Systems, Man, and Cybernetics-Part A: Systems and Humans}, vol.~41, no.~6,
  pp. 1052--1063, 2011.

\bibitem{Poria2017b}
S.~Poria, H.~Peng, A.~Hussain, N.~Howard, and E.~Cambria, ``Ensemble
  application of convolutional neural networks and multiple kernel learning for
  multimodal sentiment analysis,'' \emph{Neurocomputing}, vol. 261, pp.
  217--230, 2017.

\bibitem{Zall2016}
R.~Zall and M.~R. Keyvanpour, ``Semi-supervised multi-view ensemble learning
  based on extracting cross-view correlation,'' \emph{Advances in Electrical
  and Computer Engineering}, vol.~16, no.~2, pp. 111--125, 2016.

\bibitem{Zall2019}
R.~Zall and M.~R. Kangavari, ``On the construction of multi-relational
  classifier based on canonical correlation analysis,'' \emph{Inter. Journal of
  Artificial Intelligence}, vol.~17, no.~2, pp. 23--43, 2019.

\bibitem{Rozgic2012}
V.~Rozgić, S.~Ananthakrishnan, S.~Saleem, R.~Kumar, and R.~Prasad, ``Ensemble
  of svm trees for multimodal emotion recognition,'' in \emph{Proc. of The 2012
  Asia Pacific Signal and Information Processing Association Annual Summit and
  Conf.}\hskip 1em plus 0.5em minus 0.4em\relax IEEE, 2012, pp. 1--4.

\bibitem{Liu2019}
R.~Liu, Y.~Shi, C.~Ji, and M.~Jia, ``A survey of sentiment analysis based on
  transfer learning,'' \emph{IEEE Access}, vol.~7, pp. 85\,401--85\,412, 2019.

\bibitem{Felbo2017}
B.~Felbo, A.~Mislove, A.~Søgaard, I.~Rahwan, and S.~Lehmann, ``Using millions
  of emoji occurrences to learn any-domain representations for detecting
  sentiment, emotion and sarcasm,'' in \emph{Proc. of the Conf. on Empirical
  Methods on Natural Language Processing (EMNLP)}, 2017.

\bibitem{Hazarika2021}
D.~Hazarika, S.~Poria, R.~Zimmermann, and R.~Mihalcea, ``Conversational
  transfer learning for emotion recognition,'' \emph{Information Fusion},
  vol.~65, pp. 1--12, 2021.

\bibitem{TamilPriya2020}
D.~Tamil~Priya and J.~Divya~Udayan, ``Transfer learning techniques for emotion
  classification on visual features of images in the deep learning network,''
  \emph{Inter. Journal of Speech Technology}, vol.~23, pp. 361--372, 2020.

\bibitem{Deng2017}
J.~Deng, S.~Frühholz, Z.~Zhang, and B.~Schuller, ``Recognizing emotions from
  whispered speech based on acoustic feature transfer learning,'' \emph{IEEE
  Access}, vol.~5, pp. 5235--5246, 2017.

\bibitem{Kaya2017}
H.~Kaya, F.~Gürpınar, and A.~A. Salah, ``Video-based emotion recognition in
  the wild using deep transfer learning and score fusion,'' \emph{Image and
  Vision Computing}, vol.~65, pp. 66--75, 2017.

\bibitem{Gideon2017}
J.~Gideon, S.~Khorram, Z.~Aldeneh, D.~Dimitriadis, and E.~M. Provost,
  ``Progressive neural networks for transfer learning in emotion recognition,''
  in \emph{Proc. INTERSPEECH}, 2017, pp. 1098--1102.

\bibitem{Akhtar2019}
M.~S. Akhtar, D.~S. Chauhan, D.~Ghosal, S.~Poria, A.~Ekbal, and
  P.~Bhattacharyya, ``Multi-task learning for multi-modal emotion recognition
  and sentiment analysis,'' in \emph{Proc. of the 2019 Conf. of the Association
  for Computational Linguistics (Vol. 1)}, 2019, p. 370–379.

\bibitem{Akhtar2020}
M.~S. Akhtar, T.~Garg, and A.~Ekbal, ``Multi-task learning for aspect term
  extraction and aspect sentiment classification,'' \emph{Neurocomputing},
  2020.

\bibitem{Verduyn2012}
P.~Verduyn and K.~Brans, ``The relationship between extraversion, neuroticism
  and aspects of trait affect,'' \emph{Personality and Individual Differences},
  vol.~52, no.~6, pp. 664--669, 2012.

\bibitem{Mohammadi2012}
G.~Mohammadi and A.~Vinciarelli, ``Automatic personality perception: Prediction
  of trait attribution based on prosodic features,'' \emph{IEEE Trans. on
  Affective Computing}, vol.~3, no.~3, pp. 273--284, 2012.

\bibitem{Majumder2017}
N.~Majumder, S.~Poria, A.~Gelbukh, and E.~Cambria, ``Deep learning-based
  document modeling for personality detection from text,'' \emph{IEEE
  Intelligent Systems}, vol.~32, no.~2, pp. 74--79, 2017.

\bibitem{TeijeiroMosquera2014}
L.~Teijeiro-Mosquera, J.-I. Biel, J.~L. Alba-Castro, and D.~Gatica-Perez,
  ``What your face vlogs about: Expressions of emotion and big-five traits
  impressions in youtube,'' \emph{IEEE Trans. on Affective Computing}, vol.~6,
  no.~2, pp. 193--205, 2014.

\bibitem{Guecluetuerk2017}
Y.~Güçlütürk, U.~Güçlü, X.~Baro, H.~J. Escalante, I.~Guyon, S.~Escalera,
  M.~A.~J. Van~Gerven, and R.~Van~Lier, ``Multimodal first impression analysis
  with deep residual networks,'' \emph{IEEE Trans. on Affective Computing},
  vol.~9, no.~3, pp. 316--329, 2017.

\bibitem{Batrinca2016}
L.~Batrinca, N.~Mana, B.~Lepri, N.~Sebe, and F.~Pianesi, ``Multimodal
  personality recognition in collaborative goal-oriented tasks,'' \emph{IEEE
  Trans. on Multimedia}, vol.~18, no.~4, pp. 659--673, 2016.

\bibitem{Aslan2021}
S.~Aslan, U.~Güdükbay, and H.~Dibeklioğlu, ``Multimodal assessment of
  apparent personality using feature attention and error consistency
  constraint,'' \emph{Image and Vision Computing}, vol. 110, p. 104163, 2021.

\bibitem{Basak2018}
A.~E. Başak, U.~Güdükbay, and F.~Durupınar, ``Using real life incidents for
  creating realistic virtual crowds with data-driven emotion contagion,''
  \emph{Computers \& Graphics}, vol.~72, pp. 70--81, 2018.

\bibitem{Durupinar2009}
F.~Durupinar, N.~Pelechano, J.~Allbeck, U.~Güdükbay, and N.~I. Badler, ``How
  the ocean personality model affects the perception of crowds,'' \emph{IEEE
  Computer Graphics and Applications}, vol.~31, no.~3, pp. 22--31, 2009.

\bibitem{Yang2019}
H.-C. Yang and Z.-R. Huang, ``Mining personality traits from social messages
  for game recommender systems,'' \emph{Knowledge-Based Systems}, vol. 165, pp.
  157--168, 2019.

\bibitem{Capuano2015}
N.~Capuano, G.~D’Aniello, A.~Gaeta, and S.~Miranda, ``A personality based
  adaptive approach for information systems,'' \emph{Computers in Human
  Behavior}, vol.~44, pp. 156--165, 2015.

\bibitem{Lin2017}
J.~Lin, W.~Mao, and D.~D. Zeng, ``Personality-based refinement for sentiment
  classification in microblog,'' \emph{Knowledge-Based Systems}, vol. 132, pp.
  204--214, 2017.

\bibitem{Degottex2014}
G.~Degottex, J.~Kane, T.~Drugman, T.~Raitio, and S.~Scherer, ``Covarep—a
  collaborative voice analysis repository for speech technologies,'' in
  \emph{2014 ieee inter. conf. on acoustics, speech and signal processing
  (icassp)}.\hskip 1em plus 0.5em minus 0.4em\relax IEEE, 2014, pp. 960--964.

\bibitem{Eyben2010a}
F.~Eyben, M.~Wöllmer, and B.~Schuller, ``Opensmile: the munich versatile and
  fast open-source audio feature extractor,'' in \emph{Proc. of the 18th ACM
  inter. conf. on Multimedia}, 2010, pp. 1459--1462.

\bibitem{Baltrusaitis2018}
T.~Baltrusaitis, A.~Zadeh, Y.~C. Lim, and L.-P. Morency, ``Openface 2.0: Facial
  behavior analysis toolkit,'' in \emph{2018 13th IEEE Inter. Conf. on
  Automatic Face \& Gesture Recognition (FG 2018)}.\hskip 1em plus 0.5em minus
  0.4em\relax IEEE, 2018, pp. 59--66.

\bibitem{Krizhevsky2009}
A.~Krizhevsky and G.~Hinton, ``Learning multiple layers of features from tiny
  images,'' 2009.

\bibitem{Hosseini2018}
S.~Hosseini, H.~Yin, M.~Zhang, Y.~Elovici, and X.~Zhou, ``Mining subgraphs from
  propagation networks through temporal dynamic analysis,'' in \emph{2018 19th
  IEEE Inter. Conf. on Mobile Data Management (MDM)}.\hskip 1em plus 0.5em
  minus 0.4em\relax IEEE, 2018, pp. 66--75.

\bibitem{Hosseini2014}
S.~Hosseini, S.~Unankard, X.~Zhou, and S.~Sadiq, ``Location oriented phrase
  detection in microblogs,'' in \emph{Inter. Conf. on Database Systems for
  Advanced Applications}.\hskip 1em plus 0.5em minus 0.4em\relax Springer,
  2014, pp. 495--509.

\bibitem{Pennington2014}
J.~Pennington, R.~Socher, and C.~Manning, ``Glove: Global vectors for word
  representation,'' in \emph{Proc. of the 2014 Conf. on Empirical Methods in
  Natural Language Processing (EMNLP)}, 2014, pp. 1532--1543.

\bibitem{ArkhangelSkii2012}
A.~V. Arkhangel'Skii and V.~V. Fedorchuk, \emph{General topology I: basic
  concepts and constructions dimension theory}.\hskip 1em plus 0.5em minus
  0.4em\relax Springer Science \& Business Media, 2012, vol.~17.

\bibitem{Ge2013}
T.~Ge, K.~He, Q.~Ke, and J.~Sun, ``Optimized product quantization,'' \emph{IEEE
  Trans. on pattern analysis and machine intelligence}, vol.~36, no.~4, pp.
  744--755, 2013.

\bibitem{Ngai2015}
E.~W.~T. Ngai, S.~S.~C. Tao, and K.~K.~L. Moon, ``Social media research:
  Theories, constructs, and conceptual frameworks,'' \emph{Inter. journal of
  information management}, vol.~35, no.~1, pp. 33--44, 2015.

\bibitem{Malkov2018}
Y.~A. Malkov and D.~A. Yashunin, ``Efficient and robust approximate nearest
  neighbor search using hierarchical navigable small world graphs,'' \emph{IEEE
  Trans. on pattern analysis and machine intelligence}, vol.~42, no.~4, pp.
  824--836, 2018.

\bibitem{Taha2015}
A.~A. Taha and A.~Hanbury, ``An efficient algorithm for calculating the exact
  hausdorff distance,'' \emph{IEEE Trans. on pattern analysis and machine
  intelligence}, vol.~37, no.~11, pp. 2153--2163, 2015.

\bibitem{Li2020}
M.~Li, W.~Zuo, S.~Gu, J.~You, and D.~Zhang, ``Learning content-weighted deep
  image compression,'' \emph{IEEE Trans. on pattern analysis and machine
  intelligence}, 2020.

\bibitem{Basak2005}
J.~Basak and R.~Krishnapuram, ``Interpretable hierarchical clustering by
  constructing an unsupervised decision tree,'' \emph{IEEE Trans. on knowledge
  and data engineering}, vol.~17, no.~1, pp. 121--132, 2005.

\bibitem{Iglesias2019}
F.~Iglesias, T.~Zseby, and A.~Zimek, ``Absolute cluster validity,'' \emph{IEEE
  Trans. on pattern analysis and machine intelligence}, vol.~42, no.~9, pp.
  2096--2112, 2019.

\bibitem{Feuz2015}
K.~D. Feuz and D.~J. Cook, ``Transfer learning across feature-rich
  heterogeneous feature spaces via feature-space remapping (fsr),'' \emph{ACM
  Trans. on Intelligent Systems and Technology (TIST)}, vol.~6, no.~1, pp.
  1--27, 2015.

\bibitem{Yuan2020}
W.~Yuan, C.~Liu, F.~Liu, S.~Li, and D.~W.~K. Ng, ``Learning-based predictive
  beamforming for uav communications with jittering,'' \emph{IEEE Wireless
  Communications Letters}, vol.~9, no.~11, pp. 1970--1974, 2020.

\end{thebibliography}
		\begin{IEEEbiography}[{\includegraphics[width=1.0in,height=1.25in,clip,keepaspectratio]{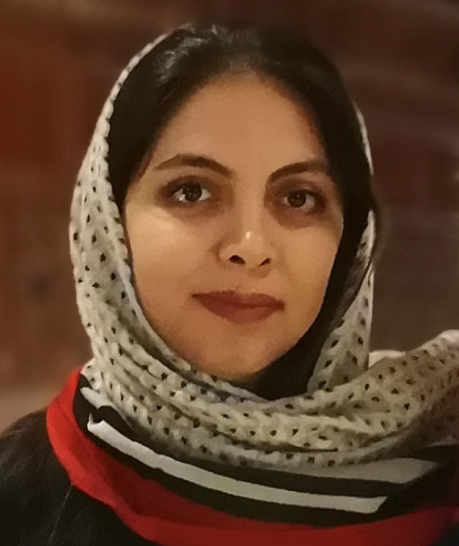}}]{Sana Rahmani} is a researcher in the Computational Cognitive laboratory of Iran University of Science and Technology (IUST). She received M.Sc. from IUST and B.S. from the University of Kurdistan, Iran, in software engineering. Her research interests include multimodal affective analysis, Human-Computer-Interaction, machine learning, and data analysis.
		\end{IEEEbiography}
		\vspace{-14mm}
		\begin{IEEEbiography}[{\includegraphics[width=1.0in,height=1.25in,clip,keepaspectratio]{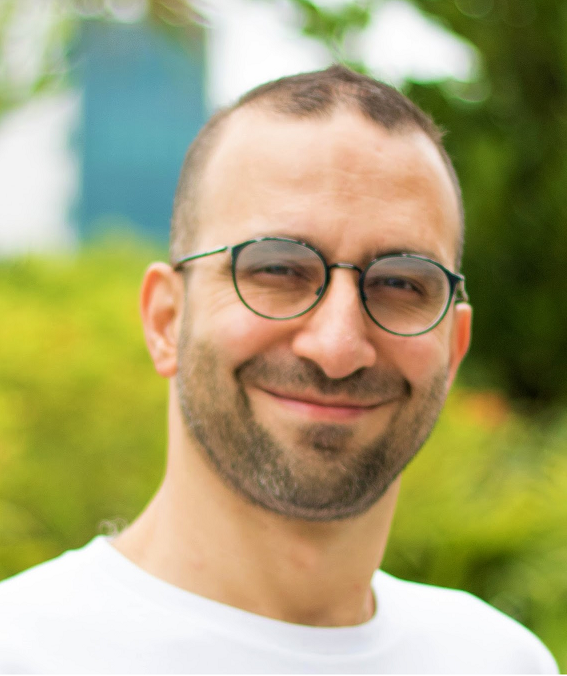}}]{Saeid Hosseini} currently works as an assistant professor at Sohar University. He won the Australian Postgraduate Award and received Ph.D. degree in Computer Science from the University of Queensland, Australia, in 2017. He has also completed two post docs in Singapore and Iran. His research interests include spatiotemporal database, dynamical processes, data and graph mining, big data analytics, recommendation systems, and machine learning.
		\end{IEEEbiography}
		\vspace{-16mm}
		\begin{IEEEbiography}[{\includegraphics[width=1.0in,height=1.25in,clip,keepaspectratio]{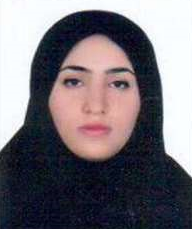}}]{Raziyeh Zall} received the B.S degree from University of Shahid Beheshti of Iran, and M.Sc degree from the Alzahra university of Iran. She is currently working toward the PHD degree in computational cognitive models laboratory at Iran University of Science and Technology. Her research interests include affective and cognitive computing, NLP, and Multi view learning.
		\end{IEEEbiography}
		\vspace{-15mm}
		\begin{IEEEbiography}[{\includegraphics[width=1.0in,height=1.25in,clip,keepaspectratio]{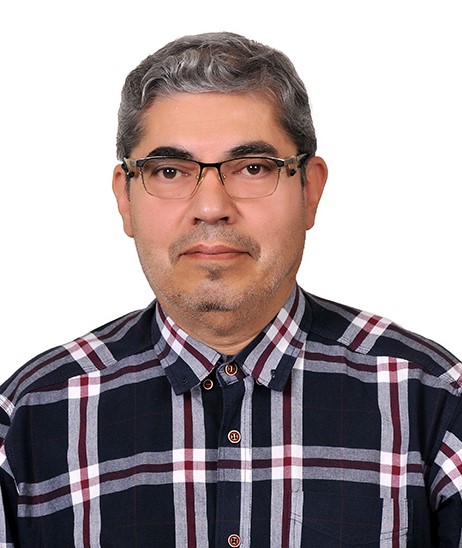}}]{Mohammad Reza Kangavari} received B.Sc. in computer science from the Sharif University of Technology, M.Sc. from Salford, and Ph.D. from the University of Manchester. He is an associate professor at the Iran University of Science and Technology. His research interests include Intelligent Systems, Human-Computer-Interaction, Cognitive Computing, Machine Learning, and Sensor Networks.
		\end{IEEEbiography}
		\vspace{-14mm}
		\begin{IEEEbiography}[{\includegraphics[width=1.0in,height=1.25in,clip,keepaspectratio]{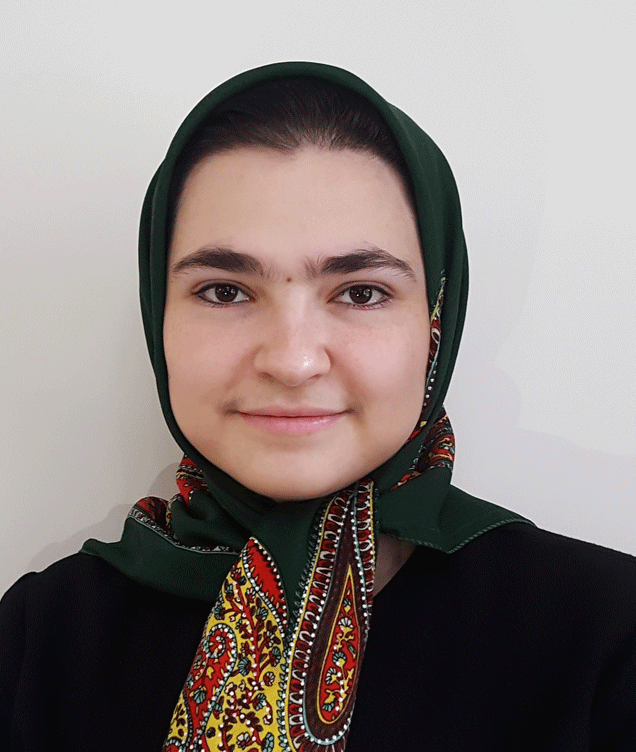}}]{Sara Kamran} is a researcher in the Computational Cognitive laboratory of Iran University of Science and Technology (IUST). She received M.Sc. from IUST and B.S. from the Urmia University of Technology, Iran, in software engineering. Her research interests include affective and cognitive computing, Human-Computer-Interaction, NLP, machine learning, and data analysis.
		\end{IEEEbiography}
		\vspace{-15mm}
		\begin{IEEEbiography}[{\includegraphics[width=1in,height=1.25in,clip,keepaspectratio]{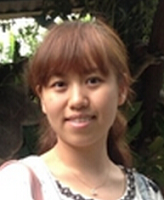}}]{Wen Hua} currently works as a Lecturer at the University of Queensland. She received her doctoral and bachelor degrees in Computer Science from Renmin University. Her current research interests include natural language processing, information extraction and retrieval, text mining, social media analysis, and spatiotemporal data analytics. She has published articles in reputed venues including SIGMOD, TKDE, VLDBJ.
		\end{IEEEbiography}
\end{document}